\newcommand{\E}[1]{{\mathbf{E}\left[#1\right] }}    
\newcommand{\Prob}[1]{\mathbf{P} \left[ #1\right]}
\newcommand{\EE}[2]{{\mathbf{E}_{#1}\left[#2\right] }}  
\newcommand{\ADMM}{{\tt ADMM}\xspace}
\newcommand{\GD}{{\tt GD}\xspace}
\newcommand{\SGD}{{\tt SGD}\xspace}
\newcommand{\SGDstar}{{\tt SGD$_\star$}\xspace}
\newcommand{\IAG}{{\tt IAG}\xspace}
\newcommand{\SAG}{{\tt SAG}\xspace}
\newcommand{\SAGA}{{\tt SAGA}\xspace}
\newcommand{\SVRG}{{\tt SVRG}\xspace}
\newcommand{\StwoGD}{{\tt S2GD}\xspace}
\newcommand{\SDCA}{{\tt SDCA}\xspace}
\newcommand{\ASDCA}{{\tt ASDCA}\xspace}
\newcommand{\MISO}{{\tt MISO}\xspace}
\newcommand{\VR}{{\tt VR}\xspace}
\newcommand{\SQN}{{\tt SQN}\xspace}
\newcommand{\BFGS}{{\tt BFGS}\xspace}
\newcommand{\LBFGS}{{\tt L-BFGS}\xspace}
\newcommand{\R}{\mathbb{R}}
\newcommand{\N}{\mathbb{N}}
\providecommand{\norm}[1]{{\left\| #1\right\|}}
\newcommand{\dotprod}[1]{\left< #1\right>}
\DeclareMathOperator*{\argmin}{\arg\!\min}
\DeclareMathOperator*{\argmax}{\arg\!\max}
\newcommand{\eqdef}{:=}
\newcommand{\cL}{{\cal L}}
\newcommand{\cO}{{\cal O}}
\newcommand{\mA}{{\bf A}}
\newcommand{\mH}{H}
\definecolor{shadecolor}{gray}{0.9}
\declaretheoremstyle[
headfont=\normalfont\bfseries,
notefont=\mdseries, notebraces={(}{)},
bodyfont=\normalfont,
postheadspace=0.5em,
spaceabove=1pt,
mdframed={
  skipabove=8pt,
  skipbelow=8pt,
  hidealllines=true,
  backgroundcolor={shadecolor},
  innerleftmargin=4pt,
  innerrightmargin=4pt}
]{shaded}
\declaretheorem[within=section]{definition}
\declaretheorem[sibling=definition]{theorem}
\declaretheorem[sibling=definition]{assumption}
\declaretheorem[sibling=definition]{lemma}
\newcommand{\revised}[1]{ #1}
\newcommand{\schmidt}[1]{}
\newcommand{\peter}[1]{}
\newcommand{\rob}[1]{}
\newcommand{\francis}[1]{}
\title{Variance-Reduced Methods for  Machine Learning }
\author[a]{Robert M. Gower}
\author[b]{Mark Schmidt}
\author[c]{Francis Bach} 
\author[d]{Peter Richt\'arik}
\affil[a]{LTCI, T\'{e}l\'{e}com Paris, Institut Polytechnique de Paris }
\affil[c]{Inria - PSL Research University, France}
\affil[b]{University of British Columbia, CCAI Affiliate Chair (Amii), Canada}
\affil[d]{King Abdullah University of Science and Technology, Kingdom of Saudi Arabia}
\keywords{optimization, machine learning, variance reduction} 
\begin{abstract}
Stochastic optimization lies at the heart of machine learning, and its cornerstone is stochastic gradient descent (\SGD), a method introduced over 60 years ago. The last 8 years have seen an exciting new development: variance reduction (\VR) for stochastic optimization methods. These \VR methods excel in settings where more than one pass through the training data is allowed, achieving a faster convergence than \SGD in theory as well as practice. These speedups underline the surge of interest in \VR methods and the fast-growing body of work on this topic. This review covers the key principles and main developments behind \VR methods for optimization with finite data sets and is aimed at non-expert readers. 
 \revised{We focus mainly on the convex setting, and leave pointers to readers interested in extensions for minimizing non-convex functions.   }
\end{abstract}
\begin{document}

\maketitle
\thispagestyle{firststyle}
\ifthenelse{\boolean{shortarticle}}{\ifthenelse{\boolean{singlecolumn}}{\abscontentformatted}{\abscontent}}{ssss}
%
%

%
%


\section{Introduction}

One of the fundamental problems studied in the field of machine learning is how to fit models to large datasets. For example, consider the classic linear least squares model,
\begin{equation}
    \label{eq:leastSquares}
 x_\star   \in \argmin_{x \in \R^d} \left\{\frac{1}{n} \sum_{i=1}^{n} (a_i^\top x  - b_i)^2 \right\}.
\end{equation}
Here, the model has $d$ parameters given by the vector $x \in \R^d$ and we are given $n$ data points $\{a_i, b_i\}$ consisting of feature vectors $a_i \in \R^d$ and target values (labels) $b_i \in \R$. Fitting the model consists of tuning these $d$ parameters so that the model's output $a_i^\top x$ is ``close'' (on average) to the  targets $b_i$. More generally, we might use some \emph{loss function} $f_i(x)$ to measure how close our model is to the $i$-th data point,
\begin{equation}
    \label{eq:optimization_problem}
 x_\star   \in \argmin_{x \in \R^d} \left\{ f(x) \eqdef \frac{1}{n} \sum_{i=1}^{n} f_i (x) \right\}.
\end{equation}
 If $f_i(x)$ is large, we say that our model's output is far from the data, and if $f_i(x) = 0$ we say that our model fits perfectly the $i$-th data point. The function $f(x)$ represents the average \emph{loss} of our model over the full dataset. A problem of the form~\eqref{eq:optimization_problem} characterizes the training of not only linear least squares, but many models studied in machine learning. For example, the logistic regression model solves
 \begin{equation}
    \label{eq:logistic}
 x_\star   \in \argmin_{x \in \R^d} \left\{\frac{1}{n} \sum_{i=1}^{n} \log(1+\exp(-b_ia_i^\top x)) + \frac{\lambda}{2}\norm{x}^2\right\},
\end{equation}
 where we are now considering a binary classification task with $b_i \in \{-1,+1\}$ (and predictions are made using the sign of $a_i^\top x$). Here, we have also used $ \frac{\lambda}{2}\norm{ x}^2 \eqdef \frac{\lambda}{2} \sum_{i=1}^d x_i^2$,  as a regularizer. This and other regularizers are commonly added to avoid overfitting to the given data, and in this case we replace each $f_i(x)$ by $f_i(x) + \frac{\lambda}{2} \norm{ x }^2$. The training procedure in most supervised machine learning models can be written in the form~\eqref{eq:optimization_problem}, including L1-regularized least squares, support vector machines, principal component analysis, conditional random fields, and deep neural networks.
  
 

A key challenge in modern instances of problem~\eqref{eq:optimization_problem} is that the number of data points $n$ can be extremely large. We regularly collect datasets going beyond terabytes, from sources such as the internet, satellites, remote sensors, financial markets, and scientific experiments. One of the most common ways to cope with such large datasets is to use \emph{stochastic gradient descent} (\SGD) methods, which use a few randomly chosen data points in each of their iterations. Further, there has been a recent surge in interest in variance-reduced (\VR) stochastic gradient methods which converge faster than classic stochastic gradient methods.


\newtcolorbox{mybox}{colback=blue!5!white,colframe=blue!50!black}
\begin{mybox}
Stochastic variance-reduced methods are as cheap to update as \SGD, and also have a fast exponential convergence like full gradient descent.
\begin{figure}[H] \centering
\includegraphics[scale=0.45]{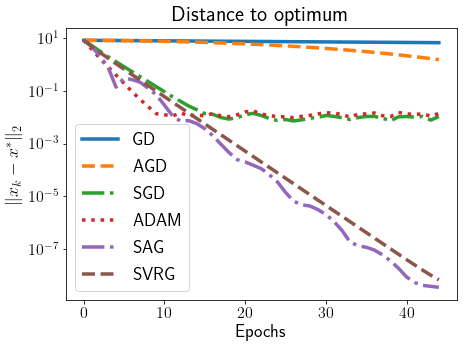} 
\caption{Comparison of the \GD, \texttt{AGD} (Accelerated \GD~\citep{nesterov1983method}), \SGD  and \texttt{ADAM}~\citep{ADAM} methods to the 
 \VR methods \SAG and \SVRG on a logistic regression problem based on the \texttt{mushrooms} data set~\citep{Chang2011}, where $n =  8,124$ and $d=112$.}
\label{fig:significant}
\end{figure}
\end{mybox}

\subsection{Gradient and Stochastic Gradient Descent}
The classic \GD (gradient descent) method applied to problem~\eqref{eq:optimization_problem} takes the form
\begin{equation}\label{eq:gradintro}
x_{k+1} = x_k - \gamma \frac{1}{n} \sum_{i=1}^n\nabla f_i(x_k),
\end{equation}
where $\gamma >0$ is a fixed stepsize\footnote{The classic way to implement \GD is to determine $\gamma$ as the approximate solution to 
$\min_{\gamma>0} f\left(x_k -\gamma \nabla f(x_k)\right)$. This is called a \emph{line search} since it is an optimization over a line segment~\citep{armijo1966,MM2019}. This line search requires multiple evaluations of the full objective function $f(x_k),$ which in our setting is too expensive since this would require loading all the data points multiples times. This is why we use fixed constant stepsize instead.}. At each iteration the \GD method needs to calculate a gradient $ \nabla f_i(x_k)$ for every $i$th data point, and thus 
 \GD takes a \emph{full pass over the $n$ data points} at each iteration. This expensive cost per iteration makes \GD prohibitive when $n$ is large.


Consider instead the \emph{stochastic gradient descent} (\SGD) method,
\begin{equation}\label{eq:sgd}
x_{k+1} = x_k - \gamma  \nabla f_{i_k}(x_k),
\end{equation}
first introduced by~\cite{RobbinsMonro:1951}. It avoids the heavy cost per iteration of \GD by using one randomly-selected $\nabla f_{i_k}(x_k)$ gradient  instead of the full gradient. In Figure~\ref{fig:significant}, we see how the \SGD method makes dramatically more progress than \GD (and even the ``accelerated'' \GD method) in the initial phase of optimization. \revised{Note that this figure plots the progress in terms of the number of \emph{epochs}, which is the number of times we have computed $n$ gradients of individual training examples. The \GD method does one iteration per epoch while the \SGD method does $n$ iterations per epoch. We compare \SGD and \GD in terms of epochs taken since we assume that $n$ is very large and that the main cost of both methods is computing the  $\nabla f_i(x_k)$ gradients.}



\subsection{The Issue with Variance}

Observe that if we choose the random index $i_k \in \{1,\ldots, n\}$ uniformly, $\Prob{i_k=i} = \frac{1}{n}$ for all~$i$,  then $\nabla f_{i_k}(x_k)$ is an unbiased estimate of $\nabla f(x_k)$ since 
\begin{align} \label{eq:sgdunbiased}
\E{\nabla f_{i_k} (x_k) \;|\; x_k} &= \sum_{i=1}^n \frac{1}{n}  \nabla  f_i (x_k) = \nabla f(x_k).
\end{align}
Thus, even though the \SGD method is not guaranteed to  decrease $f$ in each iteration, on average the method is moving in the direction of the negative full gradient, which is a direction of descent.

Unfortunately, having an unbiased estimator of the gradient is not enough to guarantee convergence of the iterates~\eqref{eq:sgd} of  \SGD. To illustrate this, in Figure~\ref{fig:sgdconst} (left) we have plotted the iterates of \SGD with a constant stepsize applied to a logistic regression function using the \texttt{fourclass} data set from LIBSVM~\citep{Chang2011}. The concentric ellipses in Figure~\ref{fig:sgdconst} are the \emph{level sets} of this function, that is, the points $x$ on a single ellipse are given by $\{x \,: \, f(x) =c\}$ for a particular constant $c \in \R.$ Different constants $c$ give different ellipses.

The iterates of \SGD do not converge to the solution (the green star), and instead form a point cloud around the solution. In contrast, we have plotted the iterates of a \VR method \SAG (that we present later) in Figure~\ref{fig:sgdconst} using the same constant stepsize.

\begin{figure}
\centering \vspace{-0.3cm}
\includegraphics[scale=0.235]{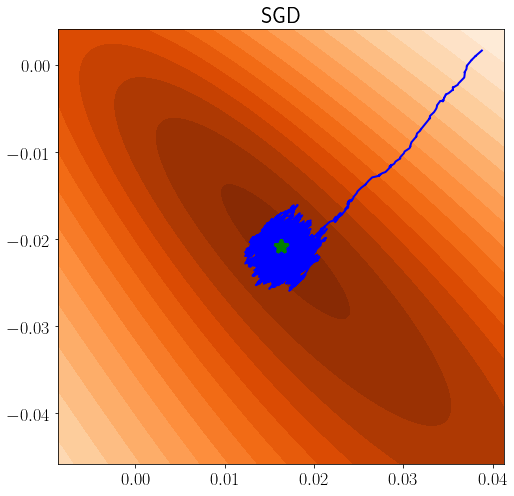}
\includegraphics[scale=0.235]{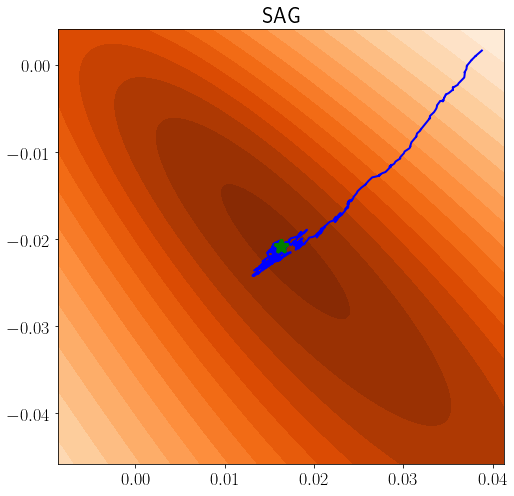}
\caption{Level set plot of 2D logistic regression with the iterates of \SGD (left) and \SAG (right) with constant stepsize. The green star is the $x_*$ solution.}\vspace{-0.3cm}
\label{fig:sgdconst}
\end{figure}
The reason why \SGD does not converge in this example is because the stochastic gradients themselves do no converge to zero, and thus the method~\eqref{eq:sgd} with a constant stepsize \emph{never stops}. This is in contrast with \GD, where the method naturally stops since $\nabla f(x_k) \rightarrow 0$ as $x_k \rightarrow x_\star$.

\subsection{Classic Variance Reduction Methods}

There are several classic techniques for dealing with the non-convergence due to the variance in the $\nabla f_{i}(x_k)$ values. For example,~\cite{RobbinsMonro:1951} address the issue of the variance using a sequence of decreasing stepsizes $\gamma_k$. This forces the product $\gamma_k  \nabla f_{i_k}(x_k)$ to converge to zero.
However it is difficult to tune this sequence of decreasing stepsizes so that the method does not stop to early (before reaching the solution) or to late (thus wasting resources).


Another classic technique for decreasing the variance is to use the average of several $\nabla f_{i}(x_k)$ values in each iteration to get a better estimate of the full gradient $\nabla f(x)$. This is called \emph{mini-batching}, and is especially useful when multiple gradients can be evaluated in parallel.
This leads to an iteration of the form
\begin{equation}\label{eq:mini}
x_{k+1} = x_k - \gamma \frac{1}{|B_k|}\sum_{i \in B_k}\nabla f_{i}(x_k),
\end{equation} 
where  $B_k \subset \{1,\ldots, n\}$ is a set of random indices 
 and $|B_k|$ is the size of $B_k$.  \revised{When $B_k$ is sampled uniformly with replacement,}
the variance of this gradient estimator is inversely proportional to the ``batch size'' $|B_k|$, so we can decrease the variance by increasing the batch size.\footnote{\revised{If we sample without replacement the variance decreases at a faster rate (see  Section 2.7  in~\cite{lohr1999sampling}), and with $|B_k|=n$  the variance is zero.}} However, the cost of this iteration is proportional to the batch size. Thus, this form of variance reduction comes at a computational cost.

Yet another common strategy to decrease variance and improve the empirical performance of \SGD is to add ``momentum'', an extra term based on the directions used in past steps. In particular, \SGD with momentum takes the form
\begin{align} \label{eq:SGDm}
m_k & = \beta m_{k-1} +  \nabla f_{i_k}(x_k) \\
x_{k+1} &= x_k - \gamma  m_k,
\end{align}
where the momentum parameter $\beta$ is in the range $(0,1)$.
Setting $m_0 =0$ and expanding the update of $m_k$ in~\eqref{eq:SGDm} we have that  $m_k$ is a weighted average of the previous gradients,
\begin{equation} m_k = \sum_{t=0}^k \beta^{k-t} \nabla f_{i_t}(x_t).\label{eq:mk}\end{equation}
Thus $m_k$ is a weighted sum of the stochastic gradients.  
Moreover since $ \sum_{t=0}^k \beta^{k-t} = \frac{1-\beta^{k+1}}{1-\beta}$, we have that $\frac{1-\beta}{1-\beta^{k-1}}m_k$ is a weighted average of stochastic gradients. 
If we compare this with  the expression of the full gradient which is a plain average, $\nabla f(x_k) = \frac{1}{n}\sum_{i=1}^n \nabla f_i(x_k)$, we can interpret $\frac{1-\beta}{1-\beta^k}m_k$ (and $m_k)$ as an estimate of the full gradient.  This weighted sum decreases the variance but it also brings about a key problem:  Since the weighted sum~\eqref{eq:mk} gives more weight to recently sampled gradients, it does not converge to the full gradient $\nabla f(x_k)$ which is a plain average. The first variance reduced method we will see in Section~\ref{sec:VR}.\ref{sec:SAG} contours this issue by using a plain average, as opposed to any weighted average.

%
%

\subsection{Modern Variance Reduction Methods}

As opposed to  classic methods that use one or more $\nabla f_i(x_k)$ directly as an approximation of $\nabla f(x_k),$ variance-reduced methods  use $\nabla f_i(x_k)$ to update an \emph{estimate} $g_k \in \R^d$ of the gradient so that $g_k \approx \nabla f(x_k)$. 
With this gradient estimate, we then take approximate gradient steps of the form
\begin{equation}
    \label{eq:iterateupdate}
    x_{k+1} = x_k - \gamma g_k,
\end{equation}
where $\gamma > 0$ is again the stepsize. 
To make~\eqref{eq:iterateupdate} converge with a constant stepsize, we need to ensure that the variance of our gradient estimate $g_k$ converges to zero, that is\footnote{To be exact~\eqref{eq:variance} is not explicitly the total variance of $g_k$, but rather the trace of the covariance matrix of~$g_k$.}
\begin{align}
 \E{\norm{g_k - \nabla f(x_k)}^2}  \quad \underset{k \rightarrow \infty}{\longrightarrow} \quad 0, \label{eq:variance} 
\end{align}
where the expectation is taken with respect to all the random variables in the algorithm up to iteration $k.$ Property~\eqref{eq:variance} ensures that the \VR method will stop when reaching the optimal point. 
We take~\eqref{eq:variance} to be a defining property of variance-reduced methods and thus refer to it as the \emph{\VR property.}
 Note that  ``reduced'' variance is a bit misleading since the variance converges to zero. The property~\eqref{eq:variance} is responsible for the faster convergence of \VR methods in theory (under suitable assumptions) and in practice as we see in Figure~\ref{fig:significant}. 



\subsection{First example of a \VR method: \SGDstar}
\label{sec:star}

 One easy fix that makes the \SGD recursion in~\eqref{eq:sgd} converge without decreasing the stepsize is to simply shift each gradient by $\nabla f_{i} (x_\star)$, that is,  to use the following method
\begin{equation}\label{eq:sgdstar}
x_{k+1} = x_k - \gamma \left(  \nabla f_{i_k}(x_k) -\nabla f_{i_k} (x_\star)\right),
\end{equation}
called \SGDstar~\citep{Gorbunov20unified}. 
We note that it is unrealistic that we would know each $\nabla f_{i} (x_\star)$, but we use \SGDstar as a simple illustration of the  properties of \VR methods. Further, many \VR methods can be seen as an \emph{approximation} of the \SGDstar method; instead of relying on knowing each $\nabla f_{i} (x_\star)$, these methods use\emph{ approximations}  that converge to $\nabla f_{i} (x_\star)$. 

Note that \SGDstar uses an unbiased estimate of the full gradient.  Indeed, since $\nabla f(x_\star) = 0$,
\[\E{\nabla f_{i_k} (x_k) - \nabla f_{i_k} (x_\star)} = \nabla f(x_k) - \nabla f(x_\star) = \nabla f(x_k).\] 
Furthermore,  \SGDstar naturally stops when it reaches the optimal point since, for any~$i$,
\[\left. \big(\nabla f_{i}(x) -\nabla f_{i} (x_\star)\big)\right|_{x =x_\star} = 0.\]
Next, we note that \SGDstar satisfies the \VR property~\eqref{eq:variance} as $x_k$ approaches $x_\star$ (for continuous $\nabla f_i$) since
\begin{align*}
 \E{\norm{g_k - \nabla f(x_k)}^2} & =  \E{\norm{\nabla f_i(x_k) - \nabla f_i(x_\star) - \nabla f(x_k)}^2}  \\
 & \leq  \E{\norm{\nabla f_i(x_k) - \nabla f_i(x_\star)}^2},
\end{align*}
where we used Lemma~\ref{lem:varbndX} with $X = \nabla f_i(x_k)-\nabla f_i(x_\star)$ and then used that $\E{\nabla f_i(x_k) - \nabla f_i(x_\star)} = \nabla f(x_k)$. This property implies that  \SGDstar  has a faster convergence rate than classic \SGD methods, as we detail in Appendix~\ref{sec:proof}.

\subsection{Faster Convergence of VR Methods}
\label{sec:assump}

In this section we introduce two standard assumptions that are used to analyze \VR methods, and discuss the speedup over classic \SGD methods that can be obtained under these assumptions. Our first assumption is Lipschitz continuity of the gradients, meaning that the gradients cannot change arbitrarily fast.
\newtcolorbox{theorybox}{colback=green!5!white,colframe=green!50!black}
\begin{theorybox}
\begin{assumption}
 \label{ass:lipschittz}
 The function $f$ is differentiable and $L$-smooth, meaning that
\begin{equation}\label{eq:Lsmooth}
\norm{\nabla f(x) - \nabla f(y)} \leq L \norm{x - y}
\end{equation}
for all $x$ and $y$ and some $0< L < \infty$. Each $f_i: \R^d \to \R$ is differentiable, $L_i$-smooth and
let $L_{\max} \eqdef \max \{L_1,\dots,L_n\}.$
\end{assumption}
\end{theorybox}
While this is typically viewed as a weak assumption,  in Section~\ref{sec:advanced} we comment on \VR methods that apply to non-smooth problems. This $L$-smoothness assumption has an intuitive interpretation for univariate functions that are twice-differentiable: it is equivalent to assuming that the second derivative is bounded by $L$, $|f''(x)| \leq L$ for every $x \in \R^d$. For multivariate twice-differentiable functions, it is equivalent to assuming that the singular values of the Hessian matrix $\nabla^2 f(x)$ are upper bounded by $L$ for every $x \in \R^d$. For the least squares problem~\eqref{eq:leastSquares}, the individual Lipschitz constants $L_i$ are given by $L_i = \norm{ a_i }^2$, while for the L2-regularized logistic regression problem~\eqref{eq:logistic}, we have $L_i = 0.25 \norm{ a_i }^2 + \lambda$.

The second assumption we consider in this section lower bounds the curvature of the functions.
\begin{theorybox}
\begin{assumption}
 \label{ass:SC}
 The function $f$ is $\mu$-strongly convex, meaning that  the function $x \mapsto f(x) - \frac \mu 2 \norm{ x }^2$ is convex  for some $\mu > 0$. Furthermore, $f_i: \R^d \to \R$ is convex for each $i=1,\ldots, n$.
 \end{assumption}
\end{theorybox}
This is a strong assumption. While each $f_i$ is convex in the least squares problem~\eqref{eq:leastSquares}, the overall function $f$ is
strongly convex if and only if the design matrix $A \eqdef [a_1,\ldots, a_n]$ has full row rank.
  On other hand, the L2-regularized logistic regression problem~\eqref{eq:logistic} satisfies this assumption with $\mu \geq \lambda$ due to the presence of the regularizer. As we detail in Section~\ref{sec:advanced},  it is possible to relax the strong convexity assumption as well as the assumption that each $f_i$ is convex.

An important problem class where the assumptions are satisfied are problems of the form
\begin{equation}
    \label{eq:linear_optimization}
 x_\star   \in \argmin_{x \in \R^d} \left\{ f(x) = \frac{1}{n} \sum_{i=1}^{n} \ell_i(a_i^\top x) + \frac \lambda 2 \norm{ x }^2 \right\}
\end{equation}
in the case when each  ``loss'' function $\ell_i: \R \mapsto \R$ is twice-differentiable with $\ell_i''$ bounded between $0$ and some upper bound $M$. This includes a variety of loss functions with L2-regularization in machine learning, such as least squares ($l_i(\alpha) = (\alpha - b_i)^2$), logistic regression, probit regression, Huber robust regression, and a variety of others. In this setting, for all $i$ we have $L_i \leq M \norm{ a_i }^2 + \lambda$ and $\mu \geq \lambda$.

The convergence rate of \GD under these assumptions is determined by the ratio $\kappa \eqdef L/\mu$, which is known as the \emph{condition number} of $f$. This ratio is always greater or equal to one, and when it is significantly larger than one, the level sets of the function become very elliptical which causes the iterates of the \GD method to oscillate.  This is illustrated in Figure~\ref{fig:kappa}.  In contrast, when $\kappa $ is close to 1,  \GD converges quickly.

\begin{figure}
\centering
\includegraphics[scale=0.2]{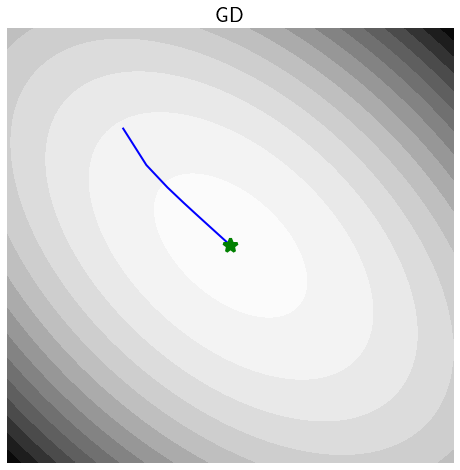}
\includegraphics[scale=0.2]{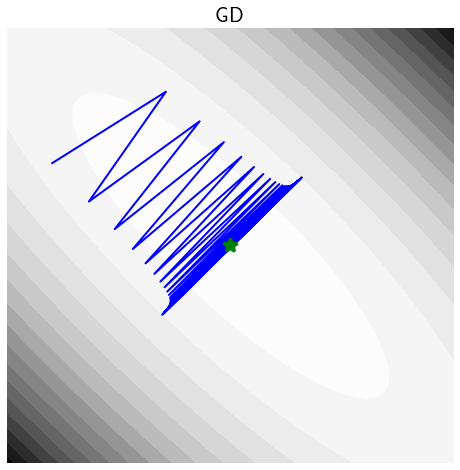}
\caption{Here we graph the level sets two logistic regression loss functions. The left level sets  are each of a well-conditioned logistic function with $\kappa \approx 1.$ The right level sets are of an ill-conditioned logistic functions with $\kappa \gg 1.$ }
\label{fig:kappa}
\end{figure}

Under Assumptions~\ref{ass:lipschittz} and~\ref{ass:SC}, \VR methods converge at a \emph{linear rate}. We say that the  function values $\{f(x_k)\}$ of a randomized method  converge linearly (in expectation) at a rate of $0 < \rho \leq 1$ if there exists a constant $C>0$ such that
\begin{equation}\label{eq:expconvf}
\E{f(x_k)} - f(x_\star)  \leq  (1-\rho)^k C = \cO(\exp(-k\rho)), \quad \forall k.
\end{equation}

This is in contrast to classic \SGD methods that only rely on an unbiased estimate of the gradient in each iteration, which under these assumptions can only obtain the sublinear rate 
\[
\E{f(x_k)} - f(x_\star)  \leq  \cO(1/k).
\]
Thus, classic \SGD methods become slower the longer we run them, while \VR methods continue to cut the error by at least a fixed fraction in each step.

As a consequence of~\eqref{eq:expconvf}, we can determine the number of iterations needed to reach a given tolerance $\varepsilon >0$ on the error as follows
\begin{equation}\label{eq:itercomplex}
 k \; \geq \; \frac{1}{\rho}\log\left(\frac{C}{\varepsilon} \right), \quad \mbox{then} \quad \E{f(x_k)} - f(x_\star) \leq \varepsilon.\end{equation}
The smallest $k$ satisfying this inequality is known as the \emph{iteration complexity} of the algorithm. Below we give the iteration complexity and the cost of one iteration in terms of $n$ for the basic variant of \GD, \SGD, and \VR methods:\\ 
\begin{center}
\begin{tabular}{c|c|c}
Algorithm & \# Iterations & Cost of 1 Iteration\\
\hline
\GD & $\cO(\kappa\log(1/\varepsilon))$ & $\cO(n)$\\
\SGD & $\cO(\revised{\kappa_{\max}}(1/\varepsilon))$ & $\cO(1)$ \\
\VR & $\cO((\kappa_{\max} + n)\log(1/\varepsilon))$ & $\cO(1)$
\end{tabular}
\end{center}

The total runtime of an algorithm is given by the product of the iteration complexity and the iteration runtime.
Above we have used $\kappa_{\max} \eqdef (\max_i L_i)/\mu$. Note that $\kappa_{\max} \geq \kappa$,  thus the iteration complexity of \GD is smaller than that of the \VR methods.\footnote{\revised{In Section~\ref{sec:advanced} we discuss how non-uniform sampling within VR methods leads to a faster rate, depending on the mean $\bar{L} \eqdef \tfrac{1}{n}\sum_i L_i$  rather than on the maximum $\max_i L_i$.}} But the  \VR methods are superior in terms of total runtime since each iteration of gradient descent costs $n$-times more than an iteration of a \VR method\footnote{And since $L_{\max} \leq n L.$}. Classic \SGD methods have the advantage that their runtime and their convergence rate does not depend on $n$, but it does have a much worse dependency on the tolerance $\varepsilon$ which explains \SGD's poor performance when the tolerance is small.
\footnote{
\revised{We have omitted an additional term for the \SGD iteration complexity of the form $\cO(\sigma^2/\mu\varepsilon)$, where $\sigma^2 \geq \EE{i}{\norm{\nabla f_i(x_*)}^2}  $, see Theorem 2.1 in~\cite{NeedellWard2015}}
}

In Appendix~\ref{sec:theory} we give a simple proof showing that  the \SGDstar method has the  same iteration complexity as the \VR methods.

\section{Basic Variance-Reduced Methods}
\label{sec:VR}
The first wave of variance-reduced methods that achieve the convergence rate from the previous section started with the stochastic average gradient (\SAG) method~\citep{roux2012stochastic,SAG}. This was followed shortly after by the stochastic dual coordinate ascent (\SDCA)~\citep{UCDC,Shalev-Shwartz2013c}, \MISO~\citep{MISO}, stochastic variance-reduced gradient (\SVRG/\StwoGD)~\citep{mahdavi2013mixedgrad,Johnson2013,S2GD,zhang2013linear}, and \SAGA \revised{(stochastic average gradient ``am\'{e}lior\'{e}'')}~\citep{SAGA} methods. In this section we present several of these original methods, while Section~\ref{sec:advanced} covers more recent methods that offer improved properties in certain settings over these original methods.

\subsection{Stochastic Average Gradient (SAG)} \label{sec:SAG}
The first \VR method is based on mimicking the structure of the full gradient. Since the full gradient $\nabla f(x)$ is a plain average of the  $\nabla f_i(x)$ gradients, our estimate $g_k$ of the full gradient should be an average of estimates of the $\nabla f_i(x)$ gradients.
 This idea leads us to our first variance-reduced method: the \emph{stochastic average gradient} (\SAG) method.

The stochastic average gradient (\SAG) method~\citep{roux2012stochastic,SAG} is a stochastic variant of the earlier incremental aggregated gradient (\IAG) method~\citep{blatt2007convergent}. The idea behind \SAG is to maintain an estimate $v_k^i \approx \nabla f_i(x_k)$ for each data point $i $. We then use the average of the $v_k^i$ values as our estimate of the full gradient, that is
\begin{equation}\label{eq:SAG}
 \bar{g}_k = \frac{1}{n} \sum_{j=1}^n v_k^j \approx\frac{1}{n} \sum_{j=1}^n \nabla f_j(x_k) = \nabla f(x_k) .
\end{equation}
At each iteration, \SAG samples $i_k \in \{1,\ldots, n\}$ and updates the $v_k^j$ using
\begin{equation}\label{eq:SAGupdate}
v_{k+1}^j  =  \begin{cases} \nabla f_{i_k}(x_k) & \quad \mbox{if }j=i_k\, , \\
v_{k}^j & \quad \mbox{if }j\neq i_k\, ,
\end{cases}
\end{equation}
where each $v^0_i$ might be initialized to zero or to an approximation of $\nabla f_i(x_0)$. As we approach a solution $x_\star$, each $v^i$ converges to $\nabla f_i(x_\star)$ which gives us the \VR property~\eqref{eq:variance}.

To implement \SAG efficiently, we need to take care in computing $\bar{g}_k$ using~\eqref{eq:SAG}, since this requires summing up $n$ vectors in $\R^d$, and since $n$ can be very large, computing this sum can be very costly. Fortunately we can avoid computing this summation from scratch every iteration since only one $v_{k}^i$  term will change in the next iteration. That is, suppose we sample the index $i_k$ on iteration $k$. It follows from~\eqref{eq:SAG} and~\eqref{eq:SAGupdate} that
\begin{align}
\bar{g}_k &=  \frac{1}{n} \sum_{j=1, j \neq i_k}^n v_k^j + \frac{1}{n} v_k^{i_k} \nonumber \\
& = \frac{1}{n} \sum_{j=1, j \neq i_k}^n v_{k-1}^j + \frac{1}{n} v_k^{i_k}  && \mbox{(Since $v_{k-1}^j = v_{k}^j$ for all $j \neq  i_k$)}\nonumber\\
&= \bar{g}_{k-1}- \frac{1}{n} v_{k-1}^{i_k} + \frac{1}{n} v_k^{i_k}. && \mbox{(Plus and minus $\frac{1}{n} v_{k-1}^{i_k}$)} \label{eq:bargup}
\end{align}
Since the $v_{k-1}^{j} $ are simply copied over to $v^j_k$, when implementing \SAG we can simply store one vector $v^j$ for each $j$. This implementation is illustrated in Algorithm~\ref{alg:SAG}. 

\begin{algorithm}
\begin{algorithmic}[1]
\State {\bf Parameters:} stepsize $\gamma >0$
\State {\bf Initialize:} $x_0, v^i = 0 \in \R^d$ for $i=1,\ldots, n$ 
\For{$k =1,\ldots, T-1$}
\State  Sample $i_k \in \{1,\ldots, n\}$
\State $\displaystyle \bar{g}_k = \bar{g}_{k-1} -\frac{1}{n} v^{i_k}
$ \label{ln:Gk1}
\State $\displaystyle v^{i_k} = \nabla f_{i_k}(x_k)$ \label{ln:gcalculate}
\State $\displaystyle \bar{g}_k = \bar{g}_{k} +\frac{1}{n} v^{i_k}
$ \label{ln:Gk2}
\State $\displaystyle x_{k+1} = x_k - \gamma \bar{g}_k$ \label{ln:xstep}
\EndFor  
\State {\bf Output:} $x_{T}$
\end{algorithmic}
\caption{\SAG: {\tt S}tochastic {\tt A}verage {\tt G}radient method}
\label{alg:SAG}
\end{algorithm}

The \SAG method was the first stochastic methods to enjoy linear convergence with an iteration complexity of $\cO((\kappa_{\max} + n)\log(1/\varepsilon))$, using a stepsize of
 $\gamma  = \cO\left(1/L_{\max}\right).$  This linear convergence can be seen in Figure~\ref{fig:significant}. Note that since an $L_{\max}$-smooth function is also $L'$-smooth for any $L' \geq L_{\max}$, this method obtains a linear convergence rate for \emph{any} sufficiently small stepsize. This is in contrast to classic \SGD methods, which only obtain sublinear rates and only under difficult-to-tune-in-practice decreasing stepsize sequences.
 
At the time, the linear convergence of \SAG was a remarkable breakthrough given that \SAG only computes a single stochastic gradient (processing a single data point) at each iteration.
However, the convergence proof by~\cite{SAG} is notoriously difficult, and relies on computer verified steps. What specifically makes \SAG hard to analyze is that $\overline{g}_k$ is a \emph{biased} estimate of the gradient. 
Next we introduce the \SAGA method, a variant of \SAG that uses the concept of  \emph{covariates} to make an \emph{unbiased} variant of the \SAG method that has similar performance but is easier to analyze.



\subsection{SAGA}

\revised{A common way to reduce the variance of the basic unbiased estimate $\nabla f_{i_k}(x_k)$ is by using what is known as \emph{covariates} (or ``control variates''). 
Let $v^i \in \R^d$ be a vector for  $i =1,\ldots, n.$ Using these vectors we can rewrite our full gradient as
\begin{align}
\nabla f(x) 
& =  \frac{1}{n} \sum_{i=1}^n (\nabla f_i(x)-v^i +v^i) \nonumber \\
&=  \frac{1}{n} \sum_{i=1}^n \left(\nabla f_i(x)-v^i +\frac{1}{n}\sum_{j=1}^n v^j\right) \nonumber  \\ 
& \eqdef  \frac{1}{n} \sum_{i=1}^n \nabla  f_i(x,v), \label{eq:probz}
\end{align}
where $\nabla f_i(x,v) \eqdef \nabla f_i(x)-v^i +\frac{1}{n}\sum_{j=1}^n v^j.$
Now we can build an unbiased estimate of the full gradient $\nabla f(x)$ by sampling a single $\nabla f_i(x,v)$ uniformly for $i \in \{1,\ldots, n\}.$
  That is, we can solve~\eqref{eq:optimization_problem} by  applying the \SGD method with the gradient estimate
\begin{align}
 g_k &= \nabla f_{i_k}(x_k,v) 
 \; = \nabla f_{i_k}(x_k)-v^i +\frac{1}{n}\sum_{j=1}^n v^j.\label{eq:gkcov}
\end{align} }

\revised{To see the effect of the choice of the $v^i$'s on the variance of $g_k$, substituting $g_k = \nabla f_{i_k}(x_k,v) $ and using  $\EE{i \sim \frac{1}{n}}{v^i} = \tfrac{1}{n}\sum_{j=1}^n  v^j $ in~\eqref{eq:variance}  gives
\begin{align}
\eqref{eq:variance} & =\E{\norm{ \nabla f_i(x_k)-v^i +\EE{i \sim \frac{1}{n}}{v^i -\nabla f_i(x_k)} }^2}  \nonumber \\
 &\leq \E{\norm{\nabla f_i(x_k)-v^i}^2},\label{eq:covariatesvar}
\end{align}
where we used Lemma~\ref{lem:varbndX} with $X = \nabla f_i(x_k)-v^i$. This bound~\eqref{eq:covariatesvar} shows us that
we obtain the \VR property~\eqref{eq:variance} if
$v^i$ approaches $\nabla f_i(x_k)$ as $k$ grows. This is why we refer to the $v^i$'s as \emph{covariates}.
We are free to choose any $v^i$, so we can choose them to reduce the variance. }

\revised{ As an example, the \SGDstar method~\eqref{eq:sgdstar} also  implements this approach with $v^i= \nabla f_i(x_{\star})$. But again, this is not practical since often we do not know $\nabla f_i(x_{\star})$. A more practical choice for $v^i$ is the gradient $\nabla f_i(\bar{x}_i)$ around a point $\bar{x}_i\in \R^d$ that we do know. 
 \SAGA  uses a reference point $\bar{x}_i \in \R^d$ for each  function $f_i$ and uses the  covariate $v^i = \nabla f_i(\bar{x}_i)$
where each $\bar{x}_i $ will be the last point for which we evaluated $\nabla f_i(\bar{x}_i)$.}
Using these covariates we can build a gradient estimate following~\eqref{eq:gkcov} which gives
\begin{equation}\label{eq:SAGAgk}
g_k \;= \; \nabla f_{i_k}(x_k)-\nabla f_{i_k}(\bar{x}_{i_k}) +\frac{1}{n}\sum_{j=1}^n \nabla f_{j}(\bar{x}_j).
\end{equation} 
To implement \SAGA, instead of storing the $n$ reference points $\bar{x}_i$ we can store the gradients $ \nabla f_i(\bar{x}_i)$
That is, let $v^j =  \nabla f_{j}(\bar{x}_j)$ for $j \in \{1,\ldots, n\}$, and similar to \SAG we update $v^j$ of one random gradient in each iteration. We formalize the \SAGA method as Algorithm~\ref{alg:SAGA}, which is similar to the implementation of \SAG (Algorithm~\ref{alg:SAG}) except now we store the previously known gradient of $f_{i_k}$ in a dummy variable $ v^{\mbox{\scriptsize old}}$ so that we can then form the unbiased gradient estimate~\eqref{eq:SAGAgk}.
\begin{algorithm}
\begin{algorithmic}[1]
\State {\bf Parameters:} stepsize $\gamma >0$ 
\State {\bf Initialize:} $x_0, v^i = 0 \in \R^d$ for $i=1,\ldots, n$ 
\For{$k =1,\ldots, T-1$}
\State  Sample $i_k \in \{1,\ldots, n\}$
\State $v^{\mbox{\scriptsize old}} =v^{i_k}$ 
\State $\displaystyle v^{i_k} = \nabla f_{i_k}(x_k)$ \label{ln:gcalculate2}
\State $\displaystyle x_{k+1} = x_k - \gamma \left( v^{i_k} -  v^{\mbox{\scriptsize old}}+\bar{g}_k \right)$ \label{ln:xstep2}
\State $\displaystyle \bar{g}_k = \bar{g}_{k-1} +\frac{1}{n} v^{i_k} -\frac{1}{n}v^{\mbox{\scriptsize old}}
$ \label{ln:Gk12}
\EndFor  
\State {\bf Output:} $x_{T}$
\end{algorithmic}
\caption{\SAGA}
\label{alg:SAGA}
\end{algorithm}

The \SAGA method has an iteration complexity of $\cO((\kappa_{\max} + n)\log(1/\varepsilon))$ using a stepsize of $\gamma = \cO(1/L_{\max})$ as in \SAG, but with a much simpler proof. However, as with \SAG, the \SAGA method needs to store the auxiliary vectors $v^i \in \R^d$ for $i =1,\ldots, n$ which amounts to an $\cO(nd)$ storage. This can be infeasible when both $d$ and $n$ are large. We detail in Section~\ref{sec:practical} how we can reduce this memory requirement for common models like regularized linear models~\eqref{eq:linear_optimization}.

\revised{When the $n$ auxiliary vectors can be stored in memory,  \SAG and \SAGA tend to perform similarly. 
When this  memory requirement is too high, the \SVRG method that we review next is a good alternative. 
The \SVRG method achieves the same convergence rate, and is often nearly as fast in practice, but \emph{only} requires $\cO(d)$ memory for general problems.}

\subsection{SVRG}

Prior to \SAGA, the first works to use covariates used them to address the high memory required of \SAG~\citep{Johnson2013,mahdavi2013mixedgrad,zhang2013linear}. These works build covariates based on a fixed reference point $\bar{x} \in \R^d$  at which we have already computed the full gradient $\nabla f(\bar{x})$. By storing $\bar{x}$ and $\nabla f(\bar{x})$, we can implement the update~\eqref{eq:SAGAgk} using $\bar{x}_{j} = \bar{x}$ for all $j$ without storing the individual gradients $\nabla f_{j}(\bar{x})$. In particular, instead of storing these vectors, we compute $\nabla f_{i_k}(\bar{x})$ in each iteration using the stored reference point $\bar{x}$. 
Originally presented under different names by different authors, this method has come to be known as the stochastic variance-reduced gradient (\SVRG) method, following the naming of~\citep{Johnson2013,zhang2013linear}.

 We formalize the \SVRG method in Algorithm~\ref{alg:f-SVRG}. 
 \revised{Using~\eqref{eq:covariatesvar} we have that the variance of the gradient estimate $g_k$ is bounded by
 \begin{align*} \E{\norm{g_k-\nabla f(x_k)}^2} & \leq  \E{\norm{\nabla f_i(x_k)-\nabla  f_i(\bar{x})}^2}\\
 & \leq L_{\max}^2\norm{{x_k} - \bar{x}}^2,\end{align*}
 where the second inequality uses the $L_i$-smoothness of each $L_i$\footnote{\revised{When each $f_i$ is also convex, we can derive the bound 
  \[\mathbb{E}[\|\nabla f(\bar x_{s-1})-\nabla f(x_k)\|^2] \leq 4L_{\max}(f(x_k)-f(x^*) + f(\bar x_{s-1})-f(x^*))\] 
  using analogous  proof to Lemma~\ref{lem:expsmooth}. This bound on the variance is key to proving a good convergence rate for \SVRG in the convex setting}.}.}
Notice that the closer $\bar{x}$ is to $x_k$, the smaller the variance of the gradient estimate.

To make the \SVRG method work well, we need to trade off the cost of updating the reference point $\bar{x}$ frequently, and thus having to compute the full gradient, with the benefits of decreasing the variance. To do this, the reference point is updated every $t$ iterations to be a point close to $x_k$: see line 11 of Algorithm~\ref{alg:f-SVRG}
That is, the \SVRG method has two loops: one outer loop in $s$  where the reference gradient $\nabla f(\bar{x}_{s-1})$ is computed (line 4), 
and one inner loop where the reference point is fixed and the inner iterates $x_k$ are updated (line 10) according to stochastic gradient steps using~\eqref{eq:gkcov}.

\revised{In contrast to \SAG and \SAGA, \SVRG  requires $\cO(d)$ memory \emph{only}.
The downsides of \SVRG are 1) we have an additional parameter $t$, the length of the inner loop, which needs to be tuned and 2)   two gradients are computed per iteration and the  full gradient needs to be computed every time the reference point is changed. }  



\begin{algorithm}
    \begin{algorithmic}[1]
        \State \textbf{Parameters} stepsize $\gamma>0$ 
        \State \textbf{Initialization} $\bar{x}_0 = x_0 \in \mathbb{R}^d$
        \For {$s=1, 2,\dots$}\vskip 1ex
            \State Compute and store $\nabla f(\bar{x}_{s-1})$
            \State $x_0 = \bar{x}_{s-1}$ \label{ln:inneriterreset}
            \State Choose the number of inner-loop iterations $t$
            \For {$k=0, 1,\dots, t-1$}\vskip 1ex
            \State Sample $i_k \in \{1,\ldots, n\}$
             \State $g_k = \nabla f_{i_k}(x_k)- \nabla f_{i_k}(\bar{x}_{s-1}) + \nabla f(\bar{x}_{s-1})$            
            \State $x_{k+1} = x_k - \gamma g_k$
        \EndFor
        \State $\bar{x}_s = x_{t}$.
        \EndFor
    \end{algorithmic}
    \caption{\SVRG: {\tt S}tochastic {\tt V}ariance-{\tt R}educed {\tt G}radient method}
    \label{alg:f-SVRG}
\end{algorithm}




 \cite{Johnson2013} showed that \SVRG has  iteration complexity  $\cO((\kappa_{\max} + n)\log(1/\varepsilon))$, similar to \SAG and \SAGA. This was shown assuming that the number of inner iterations $t$ is sampled uniformly from $\{1,\dots,m\}$, where a complex dependency must hold between $L_{\max}$, $\mu$, the stepsize $\gamma$, and  $t$.
In practice, \SVRG tends to work well by using  $\gamma = \cO(1/L_{\max})$ and inner loop length $t =n$, which is the setting we used in Figure~\ref{fig:significant}. 

There are now many variations on the original \SVRG method. For example, there are variants that use alternative distributions for $t$~\citep{S2GD} and variants that allow stepsizes of the form $\cO(1/L_{\max})$~\citep{HofmannLLM152015,kulunchakov2019estimate,loopless}.
There are also variants that use a mini-batch approximation of $\nabla f(\bar{x})$ to reduce the cost of these full-gradient evaluations, and that grow the mini-batch size in order to maintain the \VR property~\citep{Konecny:wastinggrad:2015,frostig2015competing}. And there are variants that repeatedly update $g_k$ in the inner loop according to~\citep{nguyen17b}
\begin{equation}
\label{eq:SARAH}
g_k = \nabla f_{i_k} (x_k) -\nabla f_{i_k} (x_{k-1}) +g_{k-1},
\end{equation}
which provides a more local approximation.  \revised{Using this continuous update variant~\eqref{eq:SARAH} has shown to have distinct advantages in minimizing nonconvex functions, as we briefly discuss in Section~\ref{sec:nonconvex}}.
  Finally, note that \SVRG can use the values of $\nabla f(\bar{x}_s)$   to help decide when to terminate the algorithm.

\subsection{SDCA and Variants}


%

A drawback of \SAG and \SVRG is that their stepsize depends on $L_{\max}$, which may not be known for some problems. \revised{One of the first  \VR methods (pre-dating \SVRG) was the stochastic dual coordinate ascent (\SDCA) method~\citep{Shalev-Shwartz2013c} which extended recent work on coordinate descent methods to the finite-sum problem.\footnote{The modern interest in coordinate ascent/descent methods began with~\cite{nesterov2012efficiency}, which considered coordinate-wise gradient descent with randomly-chosen coordinates, and included a result showing linear convergence for $L$-smooth strongly convex functions.
 This led to an explosion of work on the problem, as for many problem structures we can very-efficiently compute coordinate-wise gradient descent steps~\citep{UCDC}
  }}

\revised{The intuition behind \SDCA, and its variants, is that the coordinates of the gradient  provide a naturally variance-reduced estimate of the gradient. That is, let $j \in \{1,\ldots, d\}$ and let
$\nabla_j f(x) \eqdef \tfrac{\partial f(x)}{\partial  x_j} e_j$ be the \emph{coordinate-wise} derivative of $f(x),$ where $e_j \in \R^d$ is the $j$th unit coordinate vector.
An important feature of coordinate-wise derivatives is that $\nabla_j f(x_\star) =0$, since we know that $\nabla f(x_\star)=0$. This is unlike the derivative for each data point $\nabla f_j$ that may be different than zero at $x_\star.$ Due to this feature, we have that 
\begin{equation}
 \norm{\nabla f(x) - \nabla_j f(x)}^2  \quad \underset{x \rightarrow x_\star}{\longrightarrow} \quad 0,
\end{equation}
and thus  coordinate-wise derivative satisfies the \VR property~\eqref{eq:variance}.
Furthermore, we can also use $\nabla_jf(x)$ to build an unbiased estimate of $\nabla f(x).$ For instance, let $j$ be a random index sampled uniformly on average from $ \{1,\ldots, d\}.$ Thus for any given $i \in \{1,\ldots, d\}$ we have that $\Prob{j =i} = \tfrac{1}{d}.$ Consequently $d \times \nabla _j f(x)$ is an unbiased estimate of $\nabla f(x)$ since
\[\E{d \nabla _j f(x)} = d\sum_{i=1}^d \Prob{j =i}\frac{\partial  f(x)}{\partial  x_i} e_i  = \sum_{i=1}^d\frac{\partial  f(x)}{\partial  x_i} e_i = \nabla f(x). \]
Thus $\nabla_j f(x)$ has all the favorable properties we would like for a variance reduced estimate of the full gradient without using covariates. The downside of using this coordinate-wise gradient is that for our sum-of-terms problem~\eqref{eq:optimization_problem} it is expensive to compute. This is because computing $\nabla_j f(x)$ requires a full pass over the data since 
\[ \nabla _j f(x) = \frac{1}{n}\sum_{i=1}^n \nabla_j f_i(x). \]
So it would seem that using coordinate-wise derivatives is incompatible with the structure of our sum-of-terms problem.
 Fortunately though, we can often rewrite our original problem~\eqref{eq:optimization_problem} in what is known as a \emph{dual formulation} where coordinate-wise derivatives can make use of the inherit structure.}

\revised{To illustrate, the dual formulation of the L2-regularized linear models of the form~\eqref{eq:linear_optimization} is given by 
\begin{equation}\label{eq:dual}
 v^\star   \in \argmax_{v \in \R^n} \left\{ \frac 1 n\sum_{i=1}^n -\ell_i^\star(-v^i) - \frac{\lambda}{2}\norm{\frac{1}{\lambda n}\sum_{i=1}^n v^i a_i}^2\right\},
 \end{equation}
 where $\ell_i^\star(v) \eqdef \sup_x \{\dotprod{x,v} - f(x) \}$ is the \emph{convex conjugate} of $\ell_i$. We can recover the $x$ variable of our original problem~\eqref{eq:linear_optimization} using the mapping
\[x = \frac{1}{\lambda n}\sum_{i=1}^n v^ia_i.\] 
Consequently, plugging in the solution $v^\star$ to~\eqref{eq:dual} in the right hand side of the above gives $x_\star$, the solution of~\eqref{eq:linear_optimization}.}

 \revised{Notice that this dual problem has $n$ real variables $v^i\in \R$, one for each training example. Furthermore, each \emph{dual} loss function $\ell_i^\star$ in
~\eqref{eq:dual} is a function of a single $v^i$ only. That is, the first term in the loss function is \emph{separable} over coordinates. It is this separability over coordinates, combined with the simple form of the second term, that allows for an efficient implementation of a coordinate ascent method.\footnote{We call it ``coordinate ascent'' instead of ``coordinate descent'' since~\eqref{eq:dual} is a maximization problem. }
 Indeed, ~\cite{Shalev-Shwartz2013c} showed that coordinate ascent on this dual problem has an iteration complexity of $\cO((\kappa_{\max} + n)\log(1/\varepsilon))$, similar to \SAG, \SAGA and \SVRG.\footnote{This iteration complexity is in terms of the duality gap.  Related results for certain problem structures include~\citet{strohmer2009randomized,collins2008exponentiated}.} The iteration cost and algorithm structure are also quite similar: by keeping track of the sum $\sum_{i=1}^n v^ia_i$ to address the second term in~\eqref{eq:dual},  each dual coordinate ascent iteration only needs to consider a single training example and the cost per iteration is independent of $n$.  Further, we can use a one-dimensional line-search to efficiently compute a stepsize that maximally increases the dual objective as a function of one $v^i$. 
 This means that the fast worst-case runtime of \VR methods can be achieved with no knowledge of $L_{\max}$ or related quantities.}

Unfortunately, the \SDCA method also has several disadvantages. First, it requires computing the convex conjugates $\ell_i^\star$ rather than simply gradients. We do not have an equivalent of automatic differentiation for convex conjugates, so this may increase the implementation effort. More recent works have presented ``dual-free'' \SDCA methods that do not require the conjugates and instead work with gradients~\citep{dfSDCA}. However, it is no longer possible to track the dual objective in order to set the stepsize in these methods.
Second, while \SDCA only requires $\cO(n+d)$ memory for problem~\eqref{eq:linear_optimization}, \SAG/\SAGA also only requires $\cO(n+d)$ memory for this problem class (see the next section). Variants of \SDCA that apply to more general problems have the $\cO(nd)$ memory of \SAG/\SAGA since the $v^i$ become vectors with $d$-elements.
A final subtle disadvantage of \SDCA is that it implicitly assumes that the strong convexity constant $\mu$ is equal to $\lambda$. For problems where $\mu$ is greater than $\lambda$, \peter{Point to the work of Lin Xiao.} the primal \VR methods often significantly outperform \SDCA.

\section{Practical Considerations}\label{sec:practical}

In order to implement the basic \VR methods and obtain a reasonable performance, several implementation issues must be addressed. In this section, we discuss several issues that are not addressed above. 

\subsection{Setting the stepsize for SAG/SAGA/SVRG}

While we can naturally use the dual objective to set the stepsize for \SDCA, the theory for the primal \VR methods \SAG/\SAGA/\SVRG relies on stepsizes of the form $\gamma = \cO(1/L_{\max})$. Yet in practice one may not know $L_{\max}$ and better performance can often be obtained with other stepsizes. 

One classic strategy for setting the stepsize in full-gradient descent methods is the  \emph{Armijo line-search}~\citep{armijo1966}.
Given a current point $x_k$ and a search direction $g_k,$ the Armijo line-search for a $\gamma_k$ that is on the line $\gamma_k \in \{\gamma \; : \; x_k + \gamma g_k \}$ and such that gives a sufficient decrease of the function
\begin{equation}
 f(x_k + \gamma_k g_k) < f(x_k) -c \gamma_k \norm{\nabla f(x_k)}^2. 
\end{equation}
This requires calculating $ f(x_k + \gamma_k g_k)$ on several candidates stepsizes $\gamma_k$, which is prohibitively expensive since evaluating $f(x)$ requires a full pass over the data.

So instead of using the full function $f(x)$, we can use a stochastic variant where we look for $\gamma_k$ such that
\begin{equation}
 f_{i_k}(x_k + \gamma_k g_k) < f_{i_k}(x_k) -c \gamma_k \norm{\nabla f_{i_k}(x_k)}^2. 
\end{equation}
This is used in the implementation of~\cite{SAG} with $c=\tfrac{1}{2}$ on iterations where $\norm{\nabla f_{i_k}(x_k)}$ is not close to zero. 
 It often works well in practice with appropriate guesses for the trial stepsizes, although no theory exists for the method. 

Alternatively,~\cite{MISO} considers the  ``Bottou trick'' for setting the stepsize in practice.\footnote{Introduced publicly by L\'{e}on Bottou during his tutorial on \SGD methods at NeurIPS 2017.} This method takes a small sample of the dataset (typically 5\%), and performs a binary search that attempts to find the optimal stepsize when performing one pass through this sample. Similar to the Armijo line-search, the stepsize obtained with this method tends to work well in practice but no theory is known for the method.

%

\subsection{Termination Criteria}

 Iteration complexity results provide theoretical worst-case bounds on the number of iterations to reach a certain accuracy. However, these bounds depend on constants we may not know and in practice the algorithms tend to require fewer iterations than indicated by the bounds. Thus, we should consider tests to decide when the algorithm should be terminated. 

In classic full-gradient descent methods, we typically consider the norm of the gradient $\norm{\nabla f(x_k)}$ or some variation on this quantity to decide when to stop. We can naturally implement these same criteria to decide when to stop an \SVRG method, by using $\norm{\nabla f(\bar{x}_s)}$. For \SAG/\SAGA we do not explicitly compute any full gradients, but the quantity $\bar{g}_k$ converges to $\nabla f(x_k)$ so a reasonable heuristic is to use $\norm{\bar{g}_k}$ in deciding when to stop. In the case of \SDCA, with a small amount of extra bookkeeping it is possible to track the gradient of the dual objective at no additional asymptotic cost. Alternately, a more principled approach is to track the duality gap which adds an $\cO(n)$ cost per iteration but leads to termination criteria with a duality gap certificate of optimality. \revised{An alternative principled approach based on optimality conditions for strongly convex objectives is used in the MISO method~\citep{MISO}, based on a quadratic lower bound~\citep{lin2015universal}.}

\subsection{Reducing Memory Requirement}

Although \SVRG removes the memory requirement of earlier \VR methods, in practice \SAG/\SAGA require fewer iterations than \SVRG on many problems. Thus, we might consider whether there exist problems where \SAG/\SAGA can be implemented with less than $\cO(nd)$ memory. In this section we consider the class of linear models, where the memory requirement can be reduced substantially.

Consider linear models where $f_i(x) = \ell_i(a_i^\top x)$. Differentiating gives
\[
\nabla f_i(x) \;=\; \ell_i'(a_i^\top x)a_i.
\]
Provided we already have access to the feature vectors $a_i$, it is sufficient to store the scalars $\ell_i(a_i^\top x)$ in order to implement the \SAG/\SAGA method. This reduces the memory requirements from $\cO(nd)$ down to $\cO(n)$. \SVRG can also benefit from this structure of the gradients: by storing those $n$ scalars,  we can reduce the number of gradient evaluations required per \SVRG ``inner'' iteration to 1 for this problem class. 

There  exist other problem classes, such as probabilistic graphical models, where it is possible to reduce the memory requirements~\citep{Schmidtnonuni}.

\subsection{Sparse Gradients} 


For problems where the gradients $\nabla f_i(x)$ have many zero values (for example, for linear models with sparse features), the classical \SGD update may be implemented with complexity which is linear in the number of non-zero components in the corresponding gradient, which is often much less than $d$.  This possibility is lost in plain variance reduced methods. However there are two known fixes.

The first one, described by \citet[Section~4.1]{SAG} takes advantage of the simple form of the updates
to implement a ``just-in-time'' variant where the iteration cost
is proportional to the number of non-zeroes. For \SAG (but this applies to all variants), this is done by not 
explicitly storing the full vector  $\displaystyle v^{i_k}$ after each iteration. Instead, in each iteration we
only compute the elements $\displaystyle v^{i_k}_j$
corresponding to non-zero elements, by applying
the sequence of updates to each variable $\displaystyle v^{i_k}_j$
since the last iteration where it was nonzero.

The second one, described by \citet[Section~2]{leblond2017asaga} for \SAGA, adds an extra randomness to the update
$\displaystyle x_{k+1} = x_k - \gamma \left( \nabla f_{i_k}(x_k)-\nabla f_{i_k}(\bar{x}_{i_k})  +\bar{g}_k \right)$, where 
$\nabla f_{i_k}(x_k)$ and $\nabla f_{i_k}(\bar{x}_{i_k})$ are sparse, but $\bar{g}_k $ is dense. The components  of the dense term $(\bar{g}_k)_j $, $j=1,\dots,d$, are replaced by $ w_j (\bar{g}_k)_j$, where $ w \in \mathbb{R}^d$ is a random sparse vector whose support is included in one of the  $\nabla f_{i_k}(x_k)$, and in expectation is the constant vector of all ones. The update remains unbiased (but is now sparse) and the added variance does not impact the convergence rate; the details are given by \citet{leblond2017asaga}.

\section{Advanced Algorithms}
\label{sec:advanced}
%

In this section we consider extensions of the basic \VR methods. Some of these extensions generalize the basic methods to handle more general scenarios, such as problems that are not smooth and/or strongly convex. Other extensions use additional algorithmic tricks or problem structure to design faster algorithms than the basic methods.

\subsection{Hybrid SGD and VR Methods}
\revised{The convergence rate $\rho$ of the \VR methods depends on the the number of training examples $n$.}
  This is in contrast to the convergence rates of classic \SGD methods, which are sublinear but do not have a dependence on $n$. This means that \VR methods can perform worse than classic \SGD methods in the early iterations when $n$ is very large. For example, in Figure~\ref{fig:significant} we can see that \SGD is competitive with the two \VR methods throughout the first 10 epochs (passes  over the data). 

Several hybrid \SGD and \VR methods have been proposed to improve the dependence of \VR methods on $n$. \citet{roux2012stochastic} and~\citet{S2GD} analyzed \SAG and \SVRG, respectively, when initialized with $n$ iterations of \SGD. This does not change the convergence rate, but significantly improves the dependence on $n$ in the constant factor. However, this requires setting the stepsize for these initial \SGD iterations, which is more complicated than setting the stepsize for \VR methods\footnote{The implementation of~\citet{SAG} does not use this trick. Instead, it replaces $n$ in Line~7 of Algorithm~\ref{alg:SAG} with the number of training examples that have been sampled at least once. This leads to similar performance and is more difficult to analyze, but avoids needing to tune an additional stepsize.}.
 
More recently, several methods have been explored which guarantee both a linear convergence rate depending on $n$, as well as a sublinear convergence rate that does not depend on $n$. For example,~\citet{lei2017less} show that this ``best of both worlds'' result can be achieved for the ``practical'' \SVRG variant where we use a growing mini-batch approximation of $\nabla f(\bar{x})$.



\subsection{Non-Uniform Sampling}
\label{sec:NUS}

Instead of improving the dependence on $n$, a variety of works have focused on improving the dependence on the Lipschitz constants $L_i$ by using non-uniform sampling of the random training example $i_k$. In particular, these algorithms bias the choice of $i_k$ towards the larger $L_i$ values. This means that examples whose gradients can change more quickly are sampled more often. \revised{This is typically combined with using a larger stepsize that depends on the average of the $L_i$ values rather than the maximum $L_i$ value.
Under an appropriate choice of the sampling probabilities and stepsize}, this leads to improved iteration complexities of the form
\[
\cO((\kappa_{\text{mean}} + n)\log(1/\varepsilon)),
\]
which depends on $\kappa_{\text{mean}} \eqdef 
( \tfrac{1}{n}\sum_i L_i )/\mu$ rather than on $\kappa_{\max} = (\max_i L_i)/\mu$. 
This improved rate under non-uniform sampling has been shown for the basic \VR methods \SVRG~\citep{proxSVRG}, \SDCA~\citep{Qu2015b}, and \SAGA~\citep{Schmidtnonuni,GowerRichBach2018}. \peter{Add citation to  SVRG Qu-Qian.}

Virtually all existing methods use fixed probability distribution over  $\{1,\dots,n\}$ throughout the iterative process. However, it is possible to further improve on this choice by adaptively changing the probabilities during the execution of the algorithm. The first \VR method of this type, \ASDCA, was developed by  \citep{ADASDCA} and is based on updating the probabilities in \SDCA by using what is known as the {\em dual residue}.

~\citet{SAG} present an empirical method that tries to estimate local values of the $L_i$ (which may be arbitrarily smaller than the global values), and show impressive gains in experiments. A related method with a theoretical backing that uses local $L_i$ estimates is presented by~\citet{vainsencher2015local}.

\subsection{Mini-batching}



Another strategy to improve the dependence on the $L_i$ values is to use mini-batching, analogous to the classic mini-batch \SGD method~\eqref{eq:mini}, to obtain a better approximation of the gradient.  \revised{There are a number of ways of doing mini-batching, but here we will focus on a fixed batch-size chosen uniformly at random. That is, let $b \in \N$ and we choose a set $B_k \subset \{1,\ldots, n\}$ with  $|B_k|=b$ with uniform probability from all sets  with $b$ elements. 
We can now implement the \VR method by replacing each $\nabla f_i(x^k)$ with a mini-batch estimate given by $\frac{1}{b} \sum_{i \in B_k}\nabla f_i(x^k)$.}

There were a variety of early mini-batch methods~\citep{pegasos2,Konecny2015a,Konecny:wastinggrad:2015,HofmannLLM152015}, but the most recent methods are able to obtain an iteration complexity of the form~\citep{Qu2015b,SAGA-AS,SAGAminib,SVRGminib}
\begin{equation}\label{eq:complexityminib}
\cO \left(\frac{\cL(b)}{\mu}+\frac{n}{b} \right)\log\left( \frac{1}{\varepsilon}\right),
\end{equation} 
using a stepsize of $\gamma = \cO(1/\cL(b))$, where 
\begin{equation} \label{eq:expsmooth}
\cL(b) \; =\; \frac{1}{b}\frac{n-b}{n-1}L_{\max} + \frac{n}{b}\frac{b-1}{n-1}L,
\end{equation}
 is a \emph{mini batch} smoothness constant first defined in~\citep{GowerRichBach2018,Gower19}.
This  iteration complexity interpolate between the complexity of full-gradient descent when $\cL(n) = L$ and \VR methods where $\cL(1) = L_{\max}$. Since  $L\leq L_{\max} \leq n L$  it  is possible that $L \ll L_{\max}$. Thus using larger mini batches allows for the possibility of large speedups, especially in settings where we can use parallel computing to evaluate multiple gradients simultaneously. However, computing $L$ is typically more challenging than computing $L_{\max}$.

 \revised{For generalized linear models~\eqref{eq:linear_optimization} with $\ell'' <M$ we have that $L \leq M \lambda_{\max}(\mA\mA^\top) $ where $\mA = [a_1,\ldots, a_n] \in \R^{d\times n}$ and $\lambda_{\max}(\cdot)$ is the largest eigenvalue function. The largest eigenvalue can be computed using a reduced SVD at a cost of $\cO(d^2 n)$ or by using a few iterations of the power method to get an approximation. Unfortunately, for some problems this cost may negate the advantage of using mini batches. In such a case, we could replace $L$ with the upper bound $\frac{1}{n} \sum_{i=1}^n L_i.$ But this may be a very conservative upper bound.}

\subsection{Accelerated Variants}
\peter{Not sure about~\citep{shalev2014accelerated} being the first. Mention L-Katyusha as well.}
An alternative strategy for improving the dependence on $\kappa_{\max}$ is Nesterov or Polyak \emph{acceleration} (aka {\em momentum}). It is well-known that Nesterov's accelerated \GD improves the iteration complexity of the full-gradient method from $\cO(\kappa \log(1/\varepsilon))$ to $\cO(\sqrt{\kappa}\log(1/\varepsilon))$~\citep{nesterov1983method}. Although we might naively hope to see the same improvement for \VR methods, replacing the $\kappa_{\max}$ dependence with $\sqrt{\kappa_{\max}}$, we now know that the best complexity we can hope to achieve is $\cO((\sqrt{n\kappa_{\max}} + n)\log(1/\varepsilon))$~\revised{\citep{nattylower2016,lan2018optimal}, which was first achieved by an accelerated SDCA method~\citep{shalev2014accelerated}}. Nevertheless, this complexity still guarantees better worst-case performance in ill-conditioned settings (where $\kappa_{\max} \gg n$).
  
A variety of \VR methods have been proposed that incorporate an acceleration step in order to achieve this improved complexity~\citep[see, e.g.,][]{shalev2014accelerated,zhang2017stochastic,allen2017katyusha}. 
Moreover, the ``catalyst'' framework of~\citet{lin2015universal} can be used to modify \emph{any}  method achieving the complexity $\cO((\kappa_{\max} + n)\log(1/\varepsilon))$ to an accelerated method with a complexity of $\cO((\sqrt{n\kappa_{\max}} + n)\log(1/\varepsilon))$.


%

\subsection{Relaxing Smoothness}

A variety of methods have been proposed that relax the assumption that $f$ is $L$-smooth. The first of these was the \SDCA method, which can still be applied to~\eqref{eq:optimization_problem} even if the functions $\{f_i\}$ are non-smooth. This is because the dual remains a smooth problem. A classic example is the support vector machine (SVM) loss, where $f_i(x) = \max\{0,1-b_ia_i^\top x\}$. This leads to a convergence rate of the form $\cO(1/\varepsilon)$ rather than $\cO(\log(1/\varepsilon))$, so does not give a worst-case advantage over classic \SGD methods \peter{Pegassus 2}. However, unlike classic \SGD methods, we can optimally set the stepsize when using \SDCA. Indeed, prior to new wave of \VR methods, dual coordinate ascent methods have been among the most popular approaches for solving SVM problems for many years. For example, the widely-popular libSVM package~\citep{Chang2011} uses a dual coordinate ascent method.

One of the first ways to handle non-smooth problems that preserves the linear convergence rate was through the use of \emph{proximal-gradient} methods. These methods apply when $f$ has the form $f(x) = \frac 1 n\sum_{i=1}^n f_i(x) + \Omega(x)$. In this framework it is assumed that $f$ is $L$-smooth and that the regularizer $\Omega$ is convex on its domain. But the function $\Omega$ may be non-smooth and may enforce constraints on $x$. However, $\Omega$ must be ``simple'' in the sense that it is possible to efficiently compute its proximal operator applied to \revised{step of \GD, that is
\begin{equation}\label{eq:proxgrad}
 x_{k+1} = \argmin_{x\in\R^d}  \frac{1}{2} \norm{ x - (x_k - \gamma \nabla f(x_k))  }^2 + \gamma \Omega(x),
\end{equation}
should be relatively inexpensive to compute.
The above method is known as the proximal-gradient algorithm~\citep[see, e.g.,][]{combettes2011proximal}, and it achieves the iteration complexity $\cO(\kappa\log(1/\varepsilon))$ even though $\Omega$ (and consequently $f$) is not $L$-smooth or even necessarily differentiable.
A common example where~\eqref{eq:proxgrad} can be efficiently computed is the L1-regularizer}, where $\Omega(x) = \lambda\norm{x}_1$ for some regularization parameter $\lambda > 0$.

A variety of works have shown analogous results for proximal variants of \VR methods. These works essentially replace the iterate update by an update of the above form, with $\nabla f(x_k)$ replaced by the relevant approximation $g_k$. This leads to iteration complexities of $\cO((\kappa_{\max} + n)\log(1/\varepsilon))$ for proximal variants of \SAG/\SAGA/\SVRG if the functions $f_i$ are $L_i$ smooth~\citep{proxSVRG,SAGA}. 
  
Several authors have also explored combinations of \VR methods with the alternating direction method of multipliers (\ADMM) approaches~\citep{zhong2013fastADMM}, which can achieve improved rates in some cases where	 $\Omega$ does not admit an efficient proximal operator. Several authors have also considered the case where the individual $f_i$ may be non-smooth, replacing them with a smooth approximation~\citep[see][]{allen2017katyusha}. This smoothing approach \peter{Link to Nesterov} does not lead to linear convergence rates, but leads to faster sublinear rates than are obtained with \SGD methods on non-smooth problems (even with smoothing).

\subsection{Relaxing Strong Convexity}
\label{sec:relaxstrconv}

While we have focused on the case where $f$ is strongly convex and each $f_i$ is convex,  these assumptions can be relaxed. For example, if $f$ is convex but not strongly convex then early works showed that \VR methods achieve a convergence rate of $\cO(1/k)$~\citep{SAG,MISO,S2GD,mahdavi2013mixedgrad,SAGA}. This is the same rate achieved by \GD under these assumptions, and is faster than the $\cO(1/\sqrt{k})$ rate of \SGD in this setting.

Alternatively, more recent works replace strong convexity with weakened assumptions like the PL-inequality and KL-inequality~\citep{PL1963}, that include standard problems like (unregularized) least squares but where it is still possible to show linear convergence
~\citep{gong2014linear,karimi2016linear,reddi2016fast,pmlr-v48-reddi16}. 
\peter{Miso -> Qian.  L-SVRG -> (Qu, Qian, Richt)}

\subsection{Non-convex Problems}
\label{sec:nonconvex}
%

\revised{Since 2014 a sequence of papers have gradually relaxed the convexity assumptions on the functions $f_i$ and $f$, and adapted the variance reduction methods to achieve state-of-the-art complexity results for several different non-convex settings. Here we summarize some of these results, starting with the setting closest to the strongly convex setting and gradually relaxing any such convexity assumptions.
  For a more detailed discussion see~\citet{zhou2019lower} and~\citet{SPIDER}.}
%
%
%

\revised{
The first assumption we relax is the convexity of the $f_i$ functions. 
The first work in this direction was for solving the PCA problem where the individual $f_i$ functions are non-convex~\citep{Shamir2014,garber2015b}.
Both~\citet{garber2015b} and~\citet{dfSDCA} then showed how to use the catalyst  framework~\citep{lin2015universal} to devise algorithms with an iteration complexity of  $\cO(n+n^{3/4}\sqrt{L_{\max}}/\sqrt{\mu})\log(1/\varepsilon)$ for $L_i$--smooth non-convex $f_i$ functions so long as their average $f$ is  $\mu$--strongly convex. Recently, this complexity was shown to match the lower bound in this setting~\citep{zhou2019lower}. 
\citet{pmlr-v70-allen-zhu17a}  took a step further and relaxed the assumption that $f$ is strongly convex, and instead, allowed $f$ to be simply convex or even have ``bounded non-convexity'' (strong convexity but with a negative parameter $-\mu$).  To tackle this setting, \citet{pmlr-v70-allen-zhu17a}   proposed an accelerated variant of \SVRG  that achieves state-of-the-art complexity results which recently have been shown to be optimal~\citep{zhou2019lower}. \footnote{Note that, when assuming that $f$ has a bounded non-convexity,  the complexity results are with respect to finding a point $x_k\in\R^d$ such that $\E{\norm{\nabla f(x_k)}^2} \leq \varepsilon$. }
}

\revised{
Even more recently, the convexity assumptions on $f$ have been completely dropped. 
By only assuming that the $f_i$'s are smooth and $f$ has a lower bound,~\citet{SPIDER} present an algorithm based on the continuous update~\eqref{eq:SARAH} that finds an approximate stationary point  such that $\E{\norm{\nabla f(x)}^2} \leq \varepsilon$ using $\cO\left( n + \revised{\sqrt{n}}/\varepsilon^2\right)$  iterations. Concurrently to this,~\citet{SNSVRG2018} presented a more involved variant of \SVRG that uses  multiple  reference points and achieves the same iteration complexity. \citet{SPIDER} also provided a lower bound showing that  the preceding complexity 
is optimal under these assumptions.}

\revised{An interesting source of non-convex functions are problems where the objective is a composition of functions $f(g(x))$, where $g: \R^d \rightarrow \R^m$ is a mapping. Even when both $f$ and $g$ are convex, their composition may be non-convex. There are several interesting applications where either $g$ is itself an average (or even expectation) of maps, or $f$ is an average of functions, or even both~\citep{lian2016svrg}. In this setting, the finite sum structures can also be exploited to develop variance reduced methods, most of which are based on variants of \SVRG. In the setting where $g$ is a finite sum, state-of-the-art complexity results have been achieved using the continuous update~\eqref{eq:SARAH}, see~\citep{zhang2020stochastic}.}





\subsection{Second-Order Variants}

Inspired by Newton's method, there are now second-order variants of the \VR methods of the form
\begin{equation}\label{eq:secondorder}
x_{k+1} =  x_k -\gamma_k \mH_k g_k,
\end{equation}
where $\mH_k \in \R^{d\times d}$ is an estimate of the inverse Hessian matrix $\nabla^2 f(x_k)^{-1}$. 
The challenge in designing such methods is finding an efficient way to update $\mH_k$ that results in a sufficiently accurate estimate and does not cost too much. This is a difficult balance to achieve, for if $\mH_k$ is a poor estimate it may do more damage to the convergence of~\eqref{eq:secondorder} than help. On the other hand, expensive routines for updating $\mH_k$ can make the method inapplicable  when the number of data points is large.


Though difficult, the rewards are potentially high. Second order methods can be invariant (or at least insensitive) to coordinate transformations and ill-conditioned problems. This is in contrast to the first order methods that often require some feature engineering, preconditioning and data preprocessing to converge.

Most of the second order \VR methods are based on the \BFGS quasi-Newton update~\citep{Broyden1967,Fletcher1970,Goldfarb1970,Shanno1971}.

The first stochastic \BFGS method that makes use of subsampling was the online \LBFGS method~\citep{Schraudolph2007} which uses subsampled gradients to approximate Hessian-vector products. The regularized \BFGS method~\citep{Mokhtari2013,Mokhtari2014} also makes use of stochastic gradients, and further modifies the \BFGS update by adding a regularizer to the $\mH_k$ matrix.

The first method to use subsampled Hessian-vector products in the \BFGS update, as opposed to using differences of stochastic gradients, was the \SQN method~\citep{Byrd2015}.
 \cite{moritz2016linearly} then proposed combining  \SQN with  \SVRG. The resulting method performs very well in numerical tests and was the first example of a 2nd order \VR method with a proven linear convergence, though with a significantly worse complexity than the $\mathcal{O}(\kappa_{\max} +n)\log(1/\epsilon)$ rate of the \VR methods.
 \cite{moritz2016linearly}'s method was later extended to a block quasi-Newton variant and analysed with an improved complexity by~\cite{GowerGold2016}.
There are also specialized variants for the non-convex setting~\citep{Wang2014}.
\revised{These 2nd order quasi-Newton variants of the \VR methods are hard to analyse  and,  as far as we are aware, there exists no quasi-Newton variants of~\eqref{eq:secondorder} that have an update cost independent of $n$ and is proven to have a better global complexity than the $\cO((\kappa_{\max} + n)\log(1/\varepsilon))$ of \VR methods. }

\revised{However, there do exist stochastic Newton type methods, such as \emph{Stochastic Dual Newton Ascent} ({\tt SDNA})~\citep{SDNA} that perform minibatch type Newton steps in the dual space, with a cost that is independent of $n$ and does have a better convergence rate than \SDCA. 
Furthermore, more recently  \cite{SN2019} preposed a minibatch Newton method with a {\em local} linear convergence rate of the form $\cO((n/b)\log (1/\varepsilon))$ that is \emph{independent of the condition number}, where $b$ is the mini-batch size. } 

\revised{Yet another line of stochastic second order methods is being developed by combining variance reduction techniques with the cubic regularized Newton method~\citep{nesterov2006cubic} . These methods replace both the Hessian and gradient by variance reduced estimates and then minimize at each iteration an approximation of the  second order Taylor expansion with an additional cubic regularizer. What is particularly promising about these methods is that they achieve state-of-the-art sample complexity, in terms of accesses gradients and Hessians of individual functions $f_i$, for finding a second-order stationary points for smooth non-convex problems~\cite{JMLR:v20:19-055,Lansamplecubic2018}.
This is currently a very active direction of research.}
\peter{Mention Kovlev. Mention Roosta-Mahoney MAPR~\cite{Roosta-Khorasani2016}}
\rob{Note that~\cite{Roosta-Khorasani2016} uses full gradient evaluations. Does this really enter the category of being about variance reduction?}
%


\section{Discussion and Limitations}


In the previous section, we presented a variety of extensions of the basic \VR methods. We presented these as separate extensions, but many of them can be combined. For example, we can have an algorithm that uses mini-batching and acceleration while using a proximal-gradient framework to address non-smooth problems. The literature has now filled out most of these combinations.

Note that classic \SGD methods apply to the general problem of minimizing a function of the form $f(x) = \mathbb{E}_z f(x,z)$ for some random variable $z$. In this work we have focused on the case of the training error, where $z$ can only take $n$ values. However, in machine learning we are ultimately interested in the \emph{test error} where $z$ may come from a continuous distribution. If we have an infinite source of examples, we can use them within \SGD to directly optimize the test loss function. Alternatively, we can treat our $n$ training examples as samples from the test distribution, then doing 1 pass of \SGD through the training examples can be viewed as directly making progress on the test error. Although it is not possible to improve on the test error convergence rate by using \VR methods, several works show that \VR methods can improve the constants in the test error convergence rate compared to \SGD~\citep{frostig2015competing,Konecny:wastinggrad:2015}.

We have largely focused on the application of \VR methods to linear models, while mentioning several other important machine learning problems such as graphical models and principal component analysis. One of the most important applications in machine learning where \VR methods have had little impact is in training deep neural networks. Indeed, recent work shows that \VR may be ineffective at speeding up the training of deep neural networks~\citep{adefazio-varred2019}. On the other hand, \VR methods are now finding applications in a variety of other machine learning applications including policy evaluation for reinforcement learning~\citep{NIPS2019_8814,pmlr-v80-papini18a,XuGG19}, expectation maximization~\citep{NIPS2018_8021}, simulations using Monte Carlo methods~\citep{NIPS2019_8639}, saddle-point problems~\citep{palaniappan2016stochastic}, and generative adversarial networks~\citep{chavdarova2019reducing}.


\section*{Acknowledgement}
We thank Quanquan Gu, Julien Mairal, Tong Zhang and Lin Xiao for valuable suggestions and comments on an earlier draft of this paper. In particular, Quanquan's recommendations for the non-convex section improved the organization of our Section~\ref{sec:nonconvex}.

\bibliography{VR_review}

\appendix

\section{Lemmas}
Here we provide and prove several auxiliary lemmas.

\begin{lemma}\label{lem:expsmooth}
Let $f_i(x)$ be convex and $L_{\max}$-smooth for $i=1,\ldots, n.$ Let $i$ be sampled uniformly on average from $\{1,\ldots, n\}$. It follows that for every $x \in \R^d$ we have that
\begin{equation} \label{eq:expsmooth}
\EE{k}{\norm{\nabla f_i(x) -\nabla f_i(x_\star)}^2} \; \leq \; 2L_{\max} (f(x) - f(x_\star)).
\end{equation}
\end{lemma}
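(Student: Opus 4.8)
The plan is to reduce the statement to a standard bound relating the gradient norm of a convex, smooth function to its suboptimality gap. The key observation is that although $x_\star$ need not minimize any individual $f_i$, it \emph{does} minimize the shifted function
\[
h_i(z) \eqdef f_i(z) - f_i(x_\star) - \dotprod{\nabla f_i(x_\star), z - x_\star},
\]
which is convex and $L_{\max}$-smooth (it differs from $f_i$ only by an affine term and so has the same second-order behaviour), and which satisfies $\nabla h_i(x_\star) = \nabla f_i(x_\star) - \nabla f_i(x_\star) = 0$. By convexity this critical point is a global minimizer of $h_i$.

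Next I would establish the auxiliary inequality
\[
\tfrac{1}{2L_{\max}}\norm{\nabla h_i(x)}^2 \;\leq\; h_i(x) - h_i(x_\star).
\]
This follows by plugging the gradient step $w = x - \tfrac{1}{L_{\max}}\nabla h_i(x)$ into the $L_{\max}$-smoothness upper bound $h_i(w) \leq h_i(x) + \dotprod{\nabla h_i(x), w - x} + \tfrac{L_{\max}}{2}\norm{w-x}^2$, which yields $h_i(w) \leq h_i(x) - \tfrac{1}{2L_{\max}}\norm{\nabla h_i(x)}^2$, and then using $h_i(x_\star) \leq h_i(w)$ since $x_\star$ minimizes $h_i$. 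Rewriting in terms of $f_i$, using $\nabla h_i(x) = \nabla f_i(x) - \nabla f_i(x_\star)$ and $h_i(x_\star) = 0$, gives
\[
\tfrac{1}{2L_{\max}}\norm{\nabla f_i(x) - \nabla f_i(x_\star)}^2 \;\leq\; f_i(x) - f_i(x_\star) - \dotprod{\nabla f_i(x_\star), x - x_\star}.
\]

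Finally I would take the expectation over the uniformly sampled index $i$, use that $\tfrac{1}{n}\sum_i f_i = f$ (and likewise for the gradients), and invoke $\nabla f(x_\star) = 0$ to annihilate the linear term, then multiply through by $2L_{\max}$ to recover~\eqref{eq:expsmooth}. The main obstacle is purely conceptual, namely the shift to $h_i$: a naive attempt to apply a smoothness bound directly to $f_i$ fails because $x_\star$ is not a critical point of the individual $f_i$. Once one recognizes that the affine shift restores $x_\star$ as the minimizer while preserving both convexity and $L_{\max}$-smoothness, the remaining computations are entirely routine.
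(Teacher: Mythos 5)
Your proof is correct and is, at its core, the same argument as the paper's: your gradient-step point $w = x - \tfrac{1}{L_{\max}}\nabla h_i(x)$ is exactly the minimizing point $z = x - \tfrac{1}{L_{\max}}(\nabla f_i(x) - \nabla f_i(x_\star))$ at which the paper evaluates its combined convexity-plus-smoothness bound, and both proofs conclude identically by averaging over $i$ and using $\nabla f(x_\star)=0$ to annihilate the linear term. The only difference is packaging: the paper minimizes an explicit upper bound over a free point $z$, while you reorganize the same computation as the standard inequality $\tfrac{1}{2L_{\max}}\norm{\nabla h_i(x)}^2 \leq h_i(x) - h_i(x_\star)$ applied to the affinely shifted function $h_i$ — arguably cleaner, but not a genuinely different route.
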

\begin{proof}
Since $f_i$ is convex we have that
\begin{equation}\label{eq:conv}
f_i(z) \geq f_i(x_\star) + \dotprod{\nabla f_i(x_\star), z-x_\star}, \quad \forall z \in \R^d,
\end{equation}
and since $f_i$ is smooth we have that
\begin{equation}\label{eq:smoothnessfunc}
f_i(z) \leq f_i(x) + \dotprod{\nabla f_i(x), z-x} + \frac{L_{\max}}{2} \norm{z-x}^2, \quad \forall z,x \in \R^d,
\end{equation}
see Section 2.1.1 in~\cite{NesterovBook} for more equivalent ways of re-writing convexity and smoothness.
It follows for all $x,z \in \R^d$ that
\begin{eqnarray*}
f_i(x_\star) -f_i(x) & = & f_i(x_\star) -f_i(z)+f_i(z) - f_i(x)\\ &\overset{\eqref{eq:conv}+\eqref{eq:smoothnessfunc} } \leq &
\dotprod{\nabla f_i(x_\star), x_\star-z} + \dotprod{\nabla f_i(x), z-x}  \\
& & \quad +\frac{L_{\max}}{2}\norm{z-x}^2.
\end{eqnarray*}
To get the tightest upper bound on the right hand side, we can minimize the right hand side in $z$, which gives
\begin{equation}
z = x - \frac{1}{L_{\max}}(\nabla f_i(x) -\nabla f_i(x_\star)).
\end{equation}
Substituting this in the above gives
\begin{align}
f_i(x_\star) -f_i(x) & = 
\dotprod{\nabla f_i(x_\star), x_\star-x + \frac{1}{L_{\max}}(\nabla f_i(x) -\nabla f_i(x_\star))} \nonumber \\
&  \quad - \frac{1}{L_{\max}}\dotprod{\nabla f_i(x), \nabla f_i(x) -\nabla f_i(x_\star)}  \nonumber \\
&  \quad+\frac{1}{2L_{\max}}\norm{\nabla f_i(x) -\nabla f_i(x_\star)}^2 \nonumber \\
&=  \dotprod{\nabla f_i(x_\star), x_\star-x}  \nonumber \\
& \quad - \frac{1}{2L_{\max}}\norm{\nabla f_i(x)-\nabla f_i(x_\star)}^2. \nonumber
\end{align}
Taking expectation over $i$ in the above and using that $\EE{i}{f_i(x)} = f(x)$ and $\E{\nabla f_i(x_\star) } =0,$ gives the result.
\end{proof}

\begin{lemma}\label{lem:varbndX}
Let $X \in \R^d$ be a random vector with finite variance. It follows that
\begin{equation}
\E{\norm{X-\E{X}}^2} \; \leq \; \E{\norm{X}^2}.
\end{equation}
\end{lemma}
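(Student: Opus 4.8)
The plan is to use the standard bias–variance decomposition: the mean-squared deviation of $X$ from its mean equals the second moment minus the squared norm of the mean, and the latter is nonnegative. Since $\E{X}$ is a deterministic vector, I would treat it as a constant throughout and simply expand the squared norm using the inner product structure on $\R^d$.

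Concretely, first I would expand
\begin{equation*}
\norm{X - \E{X}}^2 = \norm{X}^2 - 2\dotprod{X, \E{X}} + \norm{\E{X}}^2,
\end{equation*}
which holds pointwise for each realization of $X$. Next I would take expectations on both sides and apply linearity of expectation, using the finite-variance hypothesis to guarantee that all terms are integrable and that $\E{X}$ is well defined. The key simplification is that $\E{X}$ pulls out of the middle term as a constant, so $\E{\dotprod{X, \E{X}}} = \dotprod{\E{X}, \E{X}} = \norm{\E{X}}^2$; similarly the last term is already deterministic. Combining these gives the identity $\E{\norm{X - \E{X}}^2} = \E{\norm{X}^2} - \norm{\E{X}}^2$.

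The conclusion then follows immediately: since $\norm{\E{X}}^2 \geq 0$, dropping this nonnegative term yields $\E{\norm{X - \E{X}}^2} \leq \E{\norm{X}^2}$. There is no genuine obstacle here — the entire content is careful bookkeeping with linearity of expectation while remembering that $\E{X}$ behaves as a constant vector. The only point deserving a word of care is invoking the finite-variance assumption so that $\E{\norm{X}^2}$ (and hence every quantity appearing) is finite and the manipulations are justified; the inequality is in fact an equality up to the discarded term $\norm{\E{X}}^2$, which is worth noting since it shows the bound is tight precisely when $\E{X} = 0$.
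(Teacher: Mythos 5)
Your proof is correct and follows essentially the same route as the paper: expand $\norm{X-\E{X}}^2$, use linearity of expectation to obtain $\E{\norm{X}^2} - \norm{\E{X}}^2$, and drop the nonnegative term $\norm{\E{X}}^2$. Your write-up is in fact cleaner than the paper's, which states the same decomposition but with a notational slip (writing $\E{\norm{X}}^2$ where $\norm{\E{X}}^2$ is meant).
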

\begin{proof}
\begin{align}
\E{\norm{X-\E{X}}^2} &= \E{\norm{X}^2} -2\E{\norm{X}}^2 + \E{\norm{X}}^2 \nonumber \\
& = \E{\norm{X}^2} -\E{\norm{X}}^2 \leq \E{\norm{X}^2}.
\end{align}
\end{proof}

%
%
%
%
%
%
%
%
%

\section{Convergence Proof Illustrated via \SGDstar}\label{sec:theory}
\label{sec:proof}

%
For all \VR methods, the first steps of proving convergence are the same. First we expand 
\begin{align*}
\norm{x_{k+1}-x_\star}^2 & \; =\; \norm{x_k -x_\star  - \gamma g_k}^2  \\
& \; =\;\norm{x_k  -x_\star}^2 -2\gamma \dotprod{x_k -x_\star, g_k}  + \gamma^2 \norm{g_k}^2.
\end{align*}
Now taking expectation conditioned on $x_k$ and using~\eqref{eq:sgdunbiased}, we arrive at 
\begin{align*}
\EE{k}{\norm{x_{k+1}-x_\star}^2} 
& \; =\;\norm{x_k  -x_\star}^2+ \gamma^2 \EE{k}{\norm{g_k}^2 }  \\
& \quad -2\gamma \dotprod{x_k -x_\star, \nabla f(x_k)} .
\end{align*}
Using either convexity or strong convexity, we can get rid of the $\dotprod{x_k -x_\star, \nabla f(x_k)} $ term. In particular, since $f(x)$ is  $\mu$-strongly convex, we have  
\begin{align}
\EE{k}{\norm{x_{k+1}-x_\star}^2 } 
& \; \leq \; (1-\mu \gamma )\norm{x_k  -x_\star}^2 + \gamma^2 \EE{k}{\norm{g_k}^2 } \nonumber \\
& \quad -2\gamma (f(x_k) -f(x_\star)). \label{eq:a3a3daa3}
\end{align}
To conclude the proof, we need a bound on the second moment $\EE{k}{\norm{g_k}^2 }$ of $g_k$. For the plain vanilla \SGD, often it is simply assumed that this variance term is bounded uniformly by an unknown constant $B>0$. But this assumption rarely holds in practice, and even when it does, the resulting convergence speed depends on this unknown constant $B$.
In contrast, for a \VR method we can explicitly control the second moment of $g_k$ since we can control the variance of $g_k$ and
\begin{equation}
\EE{k}{\norm{g_k -\nabla f(x_k)}^2}  \; = \; \E{\norm{g_k}^2} - \norm{\nabla f(x_k)}^2. 
\end{equation}

To illustrate, we now prove the convergence of \SGDstar.
\begin{theorybox}
\begin{theorem}[\cite{Gorbunov20unified}]
Consider the iterates of \SGDstar~\eqref{eq:sgdstar}. If Assumptions~\ref{ass:lipschittz} and~\ref{ass:SC} hold and  $\gamma \leq \frac{1}{L_{\max}},$ then the iterates converge linearly with
\begin{equation}
\E{\norm{x_{k+1}-x_\star}^2 } \; \leq \; (1- \gamma \mu )\E{\norm{x_k  -x_\star}^2}.
\end{equation}
Thus,  the iteration complexity of \SGDstar is given by
\begin{equation} \label{eq:complexunistar}
  k \; \geq \; \frac{L_{\max}}{\mu}\log\left(\frac{1}{\varepsilon}\right) \;\; \Rightarrow \;\; \frac{\E{\norm{x_{k}-x_\star}^2}}{\norm{x_0 -x_\star}^2}  < \varepsilon.
\end{equation}
\end{theorem}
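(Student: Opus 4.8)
The plan is to pick up exactly where the excerpt leaves off, at the strong-convexity bound~\eqref{eq:a3a3daa3}, and supply the one missing ingredient: a bound on the second moment $\EE{k}{\norm{g_k}^2}$. For \SGDstar the search direction is $g_k = \nabla f_{i_k}(x_k) - \nabla f_{i_k}(x_\star)$, which is precisely the quantity whose expected squared norm is controlled by Lemma~\ref{lem:expsmooth}. Thus I would first observe that, since each $f_i$ is convex and $L_{\max}$-smooth under Assumptions~\ref{ass:lipschittz} and~\ref{ass:SC},
\begin{equation*}
\EE{k}{\norm{g_k}^2} = \EE{k}{\norm{\nabla f_i(x_k) - \nabla f_i(x_\star)}^2} \leq 2L_{\max}\bigl(f(x_k) - f(x_\star)\bigr).
\end{equation*}

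Substituting this into~\eqref{eq:a3a3daa3} and collecting the function-value terms gives
\begin{equation*}
\EE{k}{\norm{x_{k+1}-x_\star}^2} \leq (1-\gamma\mu)\norm{x_k-x_\star}^2 + 2\gamma(\gamma L_{\max} - 1)\bigl(f(x_k)-f(x_\star)\bigr).
\end{equation*}
The key step --- and really the whole point of the stepsize restriction --- is to note that $\gamma \leq 1/L_{\max}$ makes the factor $\gamma L_{\max} - 1$ nonpositive, while $f(x_k) - f(x_\star) \geq 0$ by optimality of $x_\star$. Hence the entire second term may be discarded, leaving the clean one-step contraction $\EE{k}{\norm{x_{k+1}-x_\star}^2} \leq (1-\gamma\mu)\norm{x_k-x_\star}^2$. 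Taking total expectation over all the randomness up to iteration $k$ then yields the stated recursion.

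For the iteration complexity I would unroll the recursion to obtain $\E{\norm{x_k-x_\star}^2} \leq (1-\gamma\mu)^k\norm{x_0-x_\star}^2$ and apply the elementary inequality $1-\gamma\mu \leq \exp(-\gamma\mu)$. Choosing $\gamma = 1/L_{\max}$, the largest admissible stepsize and hence the one giving the fastest guaranteed rate, and requiring $\exp(-k\gamma\mu) < \varepsilon$ reduces to $k > (L_{\max}/\mu)\log(1/\varepsilon)$, matching~\eqref{eq:complexunistar}. There is no genuine obstacle here: the only subtlety is recognizing that Lemma~\ref{lem:expsmooth} applies verbatim to $g_k$ --- because $g_k$ has exactly the shifted-gradient form appearing in that lemma --- and that the stepsize bound is tuned precisely to \emph{annihilate} the function-suboptimality term rather than merely control it.
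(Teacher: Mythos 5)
Your proposal is correct and follows essentially the same route as the paper's own proof: invoke Lemma~\ref{lem:expsmooth} to bound $\EE{k}{\norm{g_k}^2}$, substitute into~\eqref{eq:a3a3daa3}, use $\gamma \leq 1/L_{\max}$ to discard the nonpositive term $2\gamma(\gamma L_{\max}-1)(f(x_k)-f(x_\star))$, and take total expectation. Your explicit unrolling of the recursion with $1-\gamma\mu \leq \exp(-\gamma\mu)$ to obtain the iteration complexity~\eqref{eq:complexunistar} is a small addition the paper leaves implicit, but it is the standard argument and entirely consistent with the paper's treatment.
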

\end{theorybox}
\begin{proof}
Using Lemma~\ref{lem:expsmooth},  we have 
\begin{align} 
\EE{k}{\norm{g_k}^2} &= 
\EE{k}{\norm{\nabla f_i(x_k) -\nabla f_i(x_\star)}^2} \nonumber \\
& \leq  2L_{\max} (f(x_k) - f(x_\star)).\label{eq:expsmooth}
\end{align}
Using  the above in~\eqref{eq:a3a3daa3} we have
\begin{align}
\EE{k}{\norm{x_{k+1}-x_\star}^2 } 
& \; \leq \; (1-\mu \gamma )\norm{x_k  -x_\star}^2 \nonumber\\
& \quad + 2\gamma (\gamma L_{\max} -1 ) (f(x_k) -f(x_\star)). \label{eq:aiansidnliasn}
\end{align}

Now by choosing $\gamma \leq \frac{1}{L_{\max}}$ we have that
$\gamma L_{\max} -1 <0$ and consequently $2\gamma (\gamma L_{\max} -1 ) (f(x_k) -f(x_\star))$ is negative since  $f(x_k) -f(x_\star) \geq 0$. So it now follows by taking  expectation in~\eqref{eq:aiansidnliasn} that
\[
\E{\norm{x_{k+1}-x_\star}^2 } \; \leq \; (1-\mu \gamma )\E{\norm{x_k  -x_\star}^2}.
\]
\end{proof}

This proof also shows that the shifted \SGD method is a variance-reduced method. Indeed, since

\begin{align*}
\E{\norm{g_k - \nabla f(x_k)}^2} & = \E{\norm{\nabla f_i(x_k) -\nabla f_i(x_\star) - \nabla f(x_k)}^2} \\
&\leq \E{\norm{\nabla f_i(x_k) -\nabla f_i(x_\star)}^2} \\
& \leq 2L_{\max} (f(x_k) - f(x_\star)),
\end{align*} 
where in the first inequality we used  Lemma~\ref{lem:varbndX} with $X = \nabla f_i(x_k) -\nabla f_i(x_\star)$.




%
 \begin{wrapfigure}{l}{25mm} 
    \includegraphics[width=1in,height=1.25in,clip,keepaspectratio]{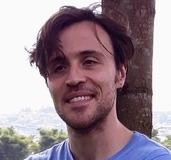}
  \end{wrapfigure}\par
  \textbf{Robert M. Gower} is a visiting researcher at Facebook AI Research (2020 New York) and  joined T\'{e}l\'{e}com Paris as an Assistant Professor in 2017. He is interested in designing and analyzing new stochastic algorithms for solving big data problems in machine learning and scientific computing. A mathematician by training, his academic studies started with a Bachelors and a Masters degree in applied mathematics at the state University of Campinas (Brazil), where he designed the current state-of-the-art algorithms for automatically calculating high order derivatives using back-propagation.\par \vspace{0.2cm}
%
   \begin{wrapfigure}{l}{25mm} 
    \includegraphics[width=1in,height=1.25in,clip,keepaspectratio]{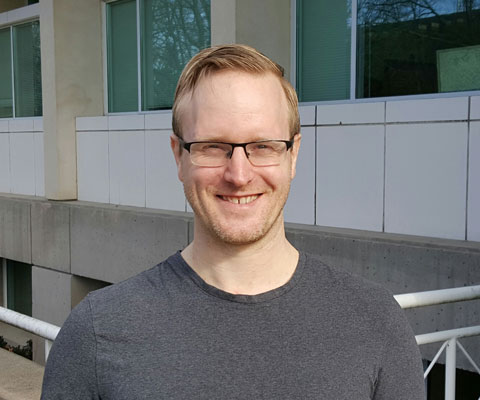}
  \end{wrapfigure}\par
  \textbf{Mark Schmidt} is an associate professor in the Department of Computer Science at the University of British Columbia. His research focuses on machine learning and numerical optimization. He is a Canada Research Chair, Alfred P. Sloan Fellow, CIFAR Canada AI Chair with the Alberta Machine Intelligence Institute (Amii), and was awarded the SIAM/MOS Lagrange Prize in Continuous Optimization with Nicolas Le Roux and Francis Bach. \par \vspace{0.2cm}
   \begin{wrapfigure}{l}{25mm} 
    \includegraphics[width=1in,height=1.25in,clip,keepaspectratio]{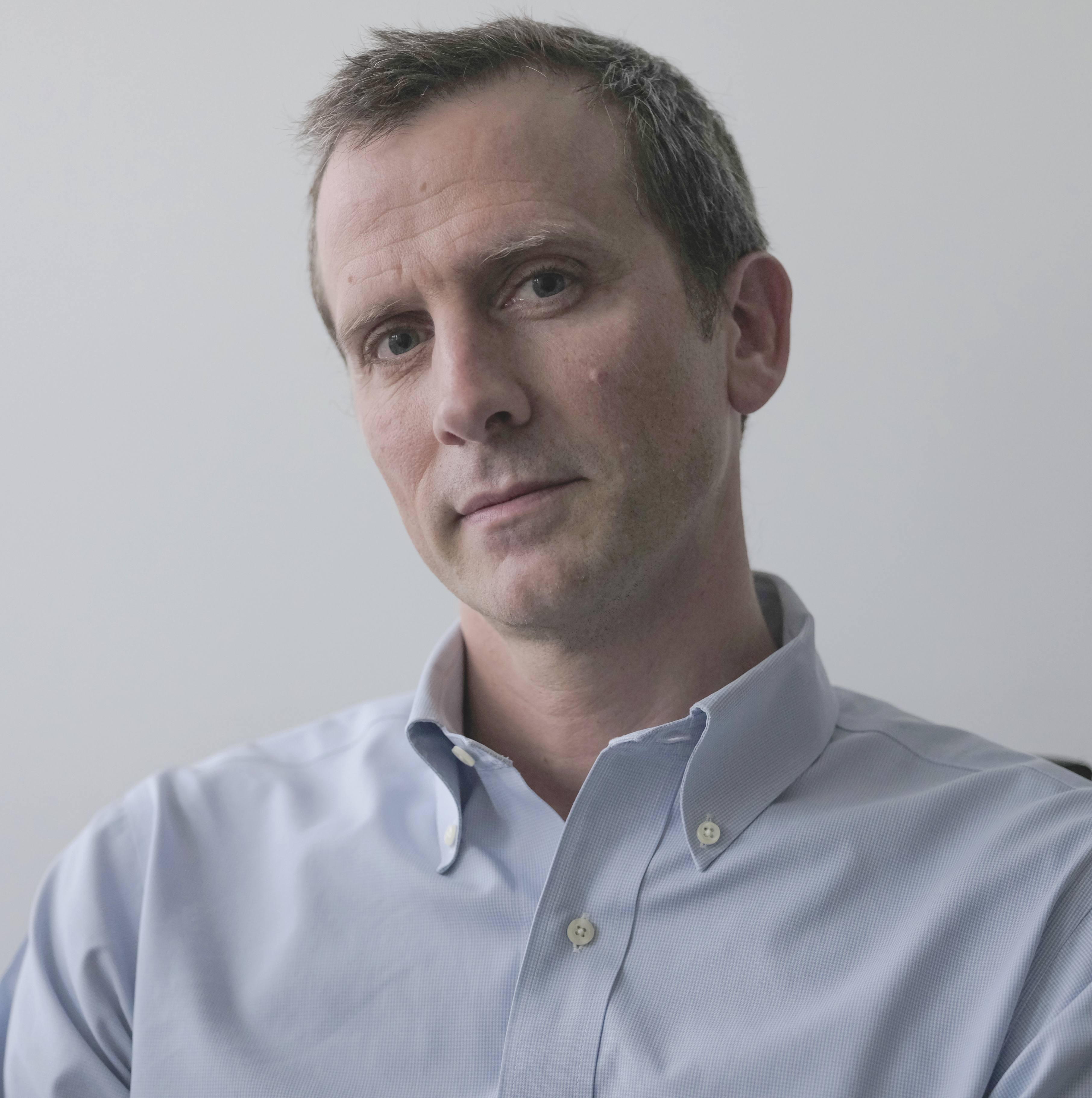}
  \end{wrapfigure}\par
  \textbf{Francis Bach} is a researcher at Inria, leading since 2011 the machine learning team which is part of the Computer Science department at \'{E}cole Normale Sup\'{e}rieure. He graduated from \'{E}cole Polytechnique in 1997 and completed his Ph.D. in Computer Science at U.C. Berkeley in 2005, working with Professor Michael Jordan. He spent two years in the Mathematical Morphology group at \'{E}cole des Mines de Paris, then he joined the computer vision project-team at Inria/\'{E}cole Normale Sup\'{e}rieure from 2007 to 2010. Francis Bach is primarily interested in machine learning, and especially in sparse methods, kernel-based learning, large-scale optimization, computer vision and signal processing. He obtained in 2009 a Starting Grant and in 2016 a Consolidator Grant from the European Research Council, and received the Inria young researcher prize in 2012, the ICML test-of-time award in 2014, as well as the Lagrange prize in continuous optimization in 2018, and the Jean-Jacques Moreau prize in 2019. In 2015, he was program co-chair of the International Conference in Machine learning (ICML), and general chair in 2018. He is now co-editor-in-chief of the Journal of Machine Learning Research.\par \vspace{0.2cm}
   \begin{wrapfigure}{l}{25mm} 
    \includegraphics[width=1in,height=1.25in,clip,keepaspectratio]{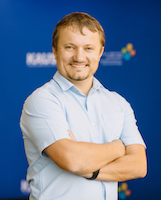}
  \end{wrapfigure}\par
  \textbf{Peter Richt\'arik} is a  Professor of Computer Science and Mathematics at KAUST. He is an EPSRC Fellow in Mathematical Sciences, Fellow of the Alan Turing Institute, and is affiliated with the Visual Computing Center and the Extreme Computing Research Center at KAUST. Prof. Richt\'arik received his PhD from Cornell University in 2007, and then worked as a Postdoctoral Fellow in Louvain, Belgium, before joining Edinburgh in 2009, and KAUST in 2017. His research interests lie at the intersection of mathematics, computer science, machine learning, optimization, numerical linear algebra, and  high performance computing.
 Through his recent work on randomized decomposition algorithms (such as randomized coordinate descent methods, stochastic gradient descent methods and their numerous extensions, improvements and variants), he has contributed to the foundations of the emerging field of big data optimization, randomized numerical linear algebra, and stochastic methods for empirical risk minimization. Several of his papers attracted international awards \revised{to his collaborators}, including the SIAM SIGEST Best Paper Award, the IMA Leslie Fox Prize (2nd prize, three times), and the INFORMS Computing Society Best Student Paper Award (sole runner up).\par

\end{document}